\def\eqref#1{equation~\ref{#1}}
\def\1{\bm{1}}
\def\rvv{{\mathbf{v}}}
\def\vtheta{{\bm{\theta}}}
\def\vv{{\bm{v}}}
\def\vx{{\bm{x}}}
\def\vy{{\bm{y}}}
\def\vz{{\bm{z}}}
\def\mA{{\bm{A}}}
\def\mG{{\bm{G}}}
\def\mI{{\bm{I}}}
\DeclareMathAlphabet{\mathsfit}{\encodingdefault}{\sfdefault}{m}{sl}
\SetMathAlphabet{\mathsfit}{bold}{\encodingdefault}{\sfdefault}{bx}{n}
\def\emG{{G}}
\newcommand{\E}{\mathbb{E}}
\newcommand{\Var}{\mathrm{Var}}
\DeclareMathOperator{\Tr}{Tr}
\tikzstyle{circled}=[fill=white, draw=black, shape=circle, left, minimum size=1pt]
\tikzstyle{dot}=[fill=black, draw=black, shape=circle, inner sep=1.0pt, outer sep=1.5pt]
\tikzstyle{circle shape}=[fill=none, draw=black, shape=circle]
\tikzstyle{arrow}=[draw=black, ->, line width=0.2mm]
\tikzstyle{double arrow}=[<->]
\tikzstyle{dashedline}=[-, dashed, line width=0.2mm]
\tikzstyle{regular line}=[-, line width=0.2mm]
\tikzstyle{dashed arrow}=[dashed, line width=0.2mm, ->]
\tikzstyle{new edge style 0}=[-, line width=0.2mm, fill={rgb,255: red,208; green,208; blue,208}]
\newcommand{\xx}{\vx^\ast}
\newcommand{\dt}{\partial_t}
\newcommand{\tr}{\mathrm{Tr}}
\newcommand{\ef}{f^{emp}}
\newcommand\Tstrut{\rule{0pt}{2.6ex}}
\newcommand{\appropto}{\mathrel{\vcenter{
  \offinterlineskip\halign{\hfil$##$\cr
    \propto\cr\noalign{\kern2pt}\sim\cr\noalign{\kern-2pt}}}}}
\def\mGamma{{\bm{\Gamma}}}
\def\X{X}
\newif\iffull
\begin{document}

\title{TULiP: Test-time Uncertainty Estimation via Linearization and Weight Perturbation}
\titlerunning{TULiP: Test-time Uncertainty Estimation}

\author{Yuhui Zhang\inst{1} \and
Dongshen Wu\inst{2} \and
Yuichiro Wada\inst{3,4} \and
Takafumi Kanamori\inst{1,4}
}

\institute{
Institute of Science Tokyo, Japan \email{kanamori@c.titech.ac.jp} \and
Imperial College London, UK \and
Fujitsu Limited, Japan \and
RIKEN AIP, Japan
}

\maketitle

\begin{abstract}
A reliable uncertainty estimation method is the foundation of many modern out-of-distribution (OOD) detectors, which are critical for safe deployments of deep learning models in the open world.
In this work, we propose TULiP, a theoretically-driven post-hoc uncertainty estimator for OOD detection.
Our approach considers a hypothetical perturbation applied to the network before convergence. Based on linearized training dynamics, we bound the effect of such perturbation, resulting in an uncertainty score computable by perturbing model parameters.
Ultimately, our approach computes uncertainty from a set of sampled predictions.
We visualize our bound on synthetic regression and classification datasets.
Furthermore, we demonstrate the effectiveness of TULiP using large-scale OOD detection benchmarks for image classification. Our method exhibits state-of-the-art performance, particularly for near-distribution samples.
\iffull
\else
\footnote{Further details are available in~\cite{tuliparxiv}.}
\fi
\keywords{Out-of-distribution detection \and Neural Tangent Kernel  .}
\end{abstract}

\section{Introduction}
\label{sec: introduction}
An important safety component for deep neural networks (NNs) in real-world environments is the awareness of their uncertainty upon receiving unknown or corrupted inputs. Such capability enables systems to fall back to conservative decision-making or defer to human judgments when faced with unfamiliar scenarios, which is imperative in safety-critical domains, such as autonomous driving~\cite{DBLP:journals/access/AtakishiyevSYG24} and medical applications~\cite{esteva2017dermatologist}. The problem is often framed as \textbf{Out-Of-Distribution (OOD)} detection, which has witnessed significant growth in recent years~\cite{yang2024generalized}.

Theoretically, this issue directly relates to quantifying epistemic uncertainty~\cite{hora1996aleatory}, which measures the lack of knowledge in a fitted model due to insufficient training data.
The training process is typically modelled as a Bayesian optimization process~\cite{DBLP:journals/csur/WangY20} with approximations for practical use~\cite{gal2016dropout,DBLP:conf/nips/DaxbergerKIEBH21}.
More generally, epistemic uncertainty could be formalized by the variance of a trained ensemble of networks $\phi(\vx; \vtheta)$:%
\begin{equation}
    \label{eq: GT-uncertainty}
    \Var_{\vtheta_\mathrm{Init}}\left[ \phi(\vx; \vtheta_\mathrm{Train}) \right],
\end{equation}
where $\vtheta_\mathrm{Train}$ are parameters trained by some learning algorithm from random initialization $\vtheta_\mathrm{Init}$.
Intuitively, higher prediction variance corresponds to inputs $\vx$ further from training set (OOD), as there lack enough training data to eliminate model disagreements via training, hence epistemic.

Many works redesign the network or training process to be uncertainty-aware~\cite{devries2018learning,huang2021mos}. However, these are often impractical due to heavy computational costs, especially for large datasets. Instead,
\emph{post-hoc} methods~\cite{liang2020odin,liu2021energy,mls2022icml,djurisic2023ash} are generally preferred.
These approaches can be easily integrated into pre-trained models without interfering with the trained backbones, significantly enhancing their versatility~\cite{yang2022openood}.
Nevertheless, they often lack a direct theoretical link to the training process, which weakens their theoretical foundation and necessitates extensive empirical validation.

Therefore, it is desirable to develop a post-hoc OOD method with direct theoretical justifications regarding the training process. Recent analysis of NN optimizations reveals that gradient descent can be seen as its first-order approximations~\cite{jacot2018neural,lee2019wide}, termed \emph{lazy} training, under specific conditions~\cite{geiger2020disentangling}.
This enabled direct (but costly) computation of Eq.~\ref{eq: GT-uncertainty}, as well as rigorous analysis~\cite{kobayashi2022disentangling} and methods~\cite{he2020bayesian} on model uncertainty, even beyond the lazy regime~\cite{chen2020dynamical}.

Inspired by this series of work, we present \textbf{TULiP} (\textbf{T}est-time \textbf{U}ncertainty by \textbf{Li}nearized fluctuations via weight \textbf{P}erturbation), a post-hoc uncertainty estimator for OOD detection. Our method considers hypothetical fluctuations of the lazy training dynamics, which can be bounded under certain assumptions and efficiently estimated via weight perturbation. In practice, we found our method works well even beyond the ideal regime. Our contribution is threefold: 
\begin{enumerate}[(i)]
    \item We provide a simple, versatile theoretical framework for analyzing epistemic uncertainty at inference time in the lazy regime, which is empirically verified;
    \item Based on our theory, we propose TULiP, an efficient and effective post-hoc OOD detector that does not require access to original training data;
    \item We test TULiP extensively using OpenOOD~\cite{zhang2023openoodv15}, a large, transparent, and unified OOD benchmark for image classifications. We show that TULiP consistently improves previous state-of-the-art methods across various settings.
\end{enumerate}

The outline is as follows. Sec.~\ref{sec: related} provides a summary of related works, Sec.~\ref{sec: method} presents theoretical derivations, and Sec.~\ref{sec: impl} bridges theory to the implementation of TULiP. Sec.~\ref{sec: experiments} reports the effectiveness of TULiP via empirical studies.

\section{Related Works}
\label{sec: related}
\subsubsection{Uncertainty Quantification (UQ)}

As being discussed in Sec.~\ref{sec: introduction}, theoretically-driven methods often estimates epistemic uncertainty from a Bayesian perspective.
This includes, notably, Variational Inference~\cite{DBLP:conf/icml/BlundellCKW15}.
Monte Carlo (MC) Dropout~\cite{gal2016dropout} connects Bayesian inference and the usage of Dropout layers, and is widely adopted in practice due to its simplicity and effectiveness.
Moreover, Laplacian Approximation~\cite{DBLP:conf/nips/DaxbergerKIEBH21} approximates the posterior via Taylor expansion and Deep Ensembles~\cite{DBLP:conf/nips/Lakshminarayanan17} directly used independently trained deep models as an ensemble.

\subsubsection{Post-hoc OOD Detectors}
For post-hoc methods, the baseline method using maximum softmax probability (MSP) was first introduced by \cite{hendrycks2016baseline}. ODIN~\cite{liang2020odin} applies input preprocessing on top of temperature scaling~\cite{guo2017calibration} to enhance MSP. \cite{liu2021energy} proposes a simple score based on energy function (EBO). \cite{mls2022icml} uses maximum logits (MLS) for efficient detection on large datasets. GEN~\cite{10203747} adopts the generalization of Shannon Entropy, while ASH~\cite{djurisic2023ash} prunes away samples’ activation at later layers and simplifies the rest. Some methods also access the training set for additional information, as MDS~\cite{lee2018mds} used Mahalanobis distance with class-conditional Gaussian distributions, and ViM~\cite{wang2022vim} computes the norm of the feature residual on the principal subspace for OOD detection.

Due to the nature of post-hoc setting, most methods such as EBO, ODIN and MLS compute OOD score solely from trained models, overlooking the training process. In contrast, as previously stated, inspired by the more theoretically-aligned UQ methods, TULiP addresses the problem with regard to the training process from a theoretical aspect. In practice, TULiP works by a series of carefully constructed weight perturbations, ultimately yielding a set of model predictions, which can be seen as surrogates to posterior samples for OOD detections. Our contribution is orthogonal to methods working with logits and predictive probabilities, such as GEN, as they can work on top of TULiP outputs. In such an aspect, TULiP shares the similar plug-and-play versatility as seen in recent works, such as ReAct~\cite{NEURIPS2021_01894d6f} and RankFeat~\cite{NEURIPS2022_71c9eb09}.

\section{Theoretical Framework}
\label{sec: method}

\subsection{Preliminaries: Linearized Training Dynamics}
Theories involving Neural Tangent Kernel (NTK) consider the linearization of neural networks.
It has been shown that under an infinite width (lazy) limit, network parameters and hence the gradients barely change across the whole training process, justifying the linearization of the training process~\cite{jacot2018neural}.
\cite{lee2019wide} extends the result by examining them in the parameter space, with a formal result equalizing linearized networks and empirical ones under mild assumptions.

Let $f_\mathrm{True}(\vx;\vtheta): \mathbb{R}^d \rightarrow \mathbb{R}^o$ be a neural network parameterized by parameters $\vtheta$.
We write the Jacobian (gradient) evaluated at $\vx$ as $\nabla_\vtheta f_\mathrm{True}(\vx) \in \mathbb{R}^{o \times |\vtheta|}$, where $|\vtheta|$ is the cardinality of $\vtheta$, i.e., the number of parameters in the network.

Let $f(\vx; \vtheta)$ denote the network linearized at $\vtheta^\ast$:
\begin{equation}
    \label{eq: linearized-net}
    f(\vx; \vtheta) := f_\mathrm{Init}(\vx) + \left. \nabla_\vtheta f_\mathrm{True}(\vx) \right|_{\vtheta = \vtheta^\ast} (\vtheta - \vtheta^\ast),
\end{equation}
where $f_\mathrm{Init}(\vx)$ is the initial network function.
Here, we treat it as a linear approximation to the true training dynamics.
For our convenience, we will interchangeably use $\nabla_\vtheta f(\vx)$ as $\left. \nabla_\vtheta f_\mathrm{True}(\vx) \right|_{\vtheta = \vtheta^\ast}$.

We consider the training data $\vx$ within an empirical dataset $X$.
For a twice-differentiable loss function $\ell(f(\vx); y(\vx))$ with target $y(\vx)$, we write it's gradient w.r.t. $f(\vx)$ as $\ell'(f(\vx); y(\vx))$ (or simply $\ell'(f(\vx))$).
Then, following \cite{lee2019wide}, $f$ is trained on $X$ following the gradient flow:
\begin{equation}
    \label{eq: Linearized dynamics}
    \dt f_t(\vx) = -\eta \E_{\vx'} \left[ \Theta(\vx, \vx') \ell'(f_t(\vx'); y(\vx')) \right],
\end{equation}
where $\E_{\vx'}$ is the expectation w.r.t. the empirical distribution for $\vx' \in X$, $\eta$ is the learning rate and $f_t$ denotes the network $f$ at time $t \in [0, T]$.
Given inputs $\vx, \vx'$, the NTK $\Theta(\vx, \vx') \in \mathbb{R}^{o \times o}$ defined as $\Theta(\vx, \vx') := \nabla_\vtheta f(\vx) \nabla_\vtheta f(\vx')^\top$ governs the linearized training Eq.~\ref{eq: Linearized dynamics}.
Under the lazy limit, the NTK $\Theta(\vx, \vx')$ stays constant across the training process and hence is independent of $t$.
Hereon, we assume the unique existence of the solution to Eq.~\ref{eq: Linearized dynamics}.

\setlength{\intextsep}{0pt}
\begin{wrapfigure}{r}{0.4\textwidth}
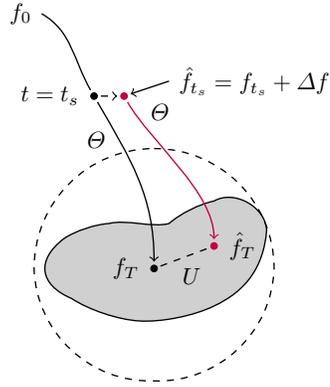

    \centering
    \resizebox{\linewidth}{!}{
    \tikzfig{fig1-posterior}
    }
    \caption{Illustration of the hypothetical perturbation process. The shaded area indicates the distribution of $\hat{f}_T$, and the dashed circle is the Surrogate Posterior Envelope (SPE). See Sec.~\ref{subsec: surrogate-ensemble} for details.}
    \label{fig: illustration-tulip}
\vspace{-2.5em}
\end{wrapfigure}

\subsubsection{Notations}

Let $\vz \in \mathbb{R}^d$ be an arbitrary test point.
Let $\| \cdot \|$ denote the Euclidean norm and induced 2-norm for vectors and matrices.
Let $\| \cdot \|_\mathrm{F}$ denote the matrix Frobenius norm.
We also denote $\left\| \cdot \right\|_{\X} := \E_{\vx} \left[ \|\cdot\|^2 \right]^{1/2}$ the data-dependent norm through out the following descriptions.
For a matrix $\mA$, let $\left| \mA \right|$ be the unique symmetric positive semi-definite solution of 
    $\left| \mA \right|^2 = \mA^\top \mA.$
    It is an extension of absolute values to matrices.
Finally, let $f(\vz) \lesssim g(\vz)$ indicate $f(\vz) \leq g(\vz) + M$ up to some constant $M$ independent of $\vz$.

\subsection{Modeling Uncertainty}

Under our problem setting, neither the distribution of initialized models nor the training process is accessible, which renders a significant difficulty for the direct computation of the uncertainty shown in Eq.~\ref{eq: GT-uncertainty}.
Instead, we choose to intuitively model it by considering a perturbation applied towards the network function $f(\vx)$, at a time $t = t_s$ \emph{before} the training terminates at $t = T$.
It is important to note that, this perturbation prior to convergence is \textit{hypothetical}, as it is inaccessible in our post-hoc setting, and we will only use it to establish our theoretical framework.

Formally, consider a perturbation to $f_{t_s}$ at $t = t_s$ as $\hat{f}_{t_s}(\vx) = f_{t_s}(\vx) + \Delta f(\vx)$.
After the perturbation, the perturbed network $\hat{f}(\vx)$ will be trained following the same dynamics as Eq.~\ref{eq: Linearized dynamics}:
\begin{equation}
    \label{eq: Linearized dynamics perturbed}
    \dt \hat{f}_t(\vx) = -\eta \E_{\vx'} [ \Theta(\vx, \vx') \ell'(\hat{f}_t(\vx'); y(\vx')) ],
\end{equation}
until termination time $T$.

Under such a perturb-then-train process, we model the epistemic uncertainty $U$ as the difference between converged networks, reads $\| f_T(\vz) - \hat{f}_T(\vz) \|$, as illustrated in Fig.~\ref{fig: illustration-tulip}.
It measures the fluctuation of the training process, capturing the sensitivity of training w.r.t. noise.
Indeed, by applying a perturbation at $t = 0$, we essentially perturb $f_\mathrm{Init}$, which can be seen as a sampling process from some model prior 
\iffull
(Appendix~\ref{appendix: eq-1-eq-5-relation}).
\else
(See~\cite{tuliparxiv} for details).
\fi
In this case, $\hat{f}_T(\vz)$ can be interpreted as samples from the trained ensemble as in Eq.~\ref{eq: GT-uncertainty}, where their variance reflects epistemic uncertainty.

However, as stated above, in practice we only know the trained network $f_T$ at $t = T$.
It would be impractical to recover the full training trajectory, apply the perturbation at $t = t_s$ and then retrain the network.
Therefore, in the following, we will come up with a bound of $\| f_T(\vz) - \hat{f}_T(\vz) \|$ given the strength of the perturbation $\Delta f$, which can be evaluated at $\vz$ without actually retrain the network.
Thus, the perturbation is \emph{hypothetical}, as it has never been applied in our practice.

\subsection{Bounding Linearized Training Fluctuations}
\label{subsec: bounding-fluctuations}

To present the bound, we shall introduce the following assumptions:
\begin{enumerate}[label=A\arabic*., leftmargin=*]
    \item \textbf{(Boundedness)} For $t \in [0, T]$, $f(\vx)$, $\nabla_\vtheta f(\vx)$, $\ell$ and $\ell'$ stay bounded, uniformly on $\vx$.
    \item \textbf{(Smoothness)} Gradient $\ell'$ of loss function $\ell$ is Lipschitz continuous: $\forall x \in X;\;\| \ell'(\hat{y}; y(\vx)) - \ell'(\hat{y}'; y(\vx)) \| \leq L \|\hat{y} - \hat{y}'\|$.
    \item \textbf{(Perturbation)} The perturbation $\Delta f$ can be uniformly bounded by a constant $\alpha$, that is, for all $\vx$ (not limited to the support of training data), i.e.,  $\forall x \in \mathbb{R}^d;\;\| \Delta f(\vx) \| \leq \alpha$.
    \item \textbf{(Convergence)} Most importantly, for the original network trained via Eq.~\ref{eq: Linearized dynamics} and the perturbed network trained via Eq.~\ref{eq: Linearized dynamics perturbed}, we assume \emph{near-perfect convergence} on the training set $\vx$ at termination time $t = T$, i.e., 
    $\exists \beta \in \mathbb{R}, \forall x \in X;\;\|f_T(\vx) - \hat{f}_T(\vx)\| \leq \beta$.
    \item \textbf{(Closeness)} The test point $\vz$ is close to the training set $X$ in the sense that $\inf_{\vx \in X} \| \nabla_\vtheta f(\vz) - \nabla_\vtheta f(\vx) \|_\mathrm{F}^2
    \leq \tr \left( \Theta(\vz, \vz)\!+\!\E_\vx\left[\Theta(\vx, \vx)\right]\!-\!2\E_\vx\left[\left|\Theta(\vz, \vx)\right|\right]\right).$
\end{enumerate}

Under reasonable conditions, it has been shown both empirically~\cite{DBLP:conf/iclr/ZhangBHRV17} and theoretically~\cite{DBLP:conf/icml/DuLL0Z19} that overparameterized NNs trained via SGD is able to achieve near-zero training loss on almost arbitrary training sets. To nice loss functions as $\ell(\vy;\vy')=0$ implies $\vy=\vy'$, this implies A4.

We note that the closeness assumption A5 is weak. Intuitively, it assumes that for $\vx \in X$, $\Theta(\vz, \vx)$ contains many positive singular values ($\vz$ is close to $\vx$ in the sense of $\Theta$).
Fig.~\ref{fig: toy-datasets}(c) provides empirical justifications for this closeness assumption.

\cite{jacot2018neural} connected lazy NNs trained with mean square error (MSE) loss and kernel ridge regression.
Essentially, it hints that under such a setup, an NN embeds datapoints $\vx$ into gradients $\nabla_\vtheta f(\vx)$.
Indeed, for a general class of loss functions,
it is possible to show that:

\begin{theorem} \label{theorem: direct gronwall}
    Under assumptions A1-A4, for a network $f$ trained with Eq.~\ref{eq: Linearized dynamics} and a perturbed network $\hat{f}$ trained with Eq.~\ref{eq: Linearized dynamics perturbed}, the perturbation applied at time $t_s = T - \Delta T$ bounded by $\alpha$, we have
    \begin{equation}
        \label{eq: original uncertainty bound}
        \| f_T(\vz) - \hat{f}_T(\vz) \| \leq \inf_{\vx \in X} C \| \nabla_\vtheta f(\vz) - \nabla_\vtheta f(\vx) \|_\mathrm{F} + 2 \alpha + \beta,
    \end{equation}
    where $C = \frac{\alpha \eta \bar{\Theta}_X^{1/2}}{\lambda_{max}}\left( e^{(T - t_s)L\lambda_{max}} - 1 \right)$,
    $\bar{\Theta}_X^{1/2} := \left\| \nabla_\vtheta f(\vx) \right\|_{\X}$ is average gradient norm over training data, and $\lambda_{max} := \frac{1}{\sqrt{N}} \| \mG \|$ for a generalized Gram matrix $\emG_{i,j} := \| \Theta(x_i, x_j) \|$ of dataset $X = \{x_1, x_2, \dots, x_N\}$.
    
    Furthermore, under assumption A5, up to constants $J$ and $K$,
    \begin{equation}
        \label{eq: bound-practical}
        \| f_T(\vz)\! - \! \hat{f}_T(\vz) \| \lesssim J \big[\tr\left(\Theta(\vz, \vz)+\E_x[\Theta(\vx, \vx)]\right) - 2 K\left\|\nabla_\vtheta f_T(\vz)\left(\vtheta_T\!-\!\vtheta_{t_s}\right)\right\| \big]^\frac{1}{2}.%
    \end{equation}
\end{theorem}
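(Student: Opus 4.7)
My plan is to work in the linearized parameter space. By (\ref{eq: linearized-net}), the unperturbed trajectory satisfies $f_t(\vx) = f_\mathrm{Init}(\vx) + \nabla_\vtheta f(\vx)(\vtheta_t - \vtheta^\ast)$. For the perturbed trajectory I introduce parameters $\hat{\vtheta}_t$ with $\hat{\vtheta}_{t_s} = \vtheta_{t_s}$ and write $\hat{f}_t(\vx) = f_\mathrm{Init}(\vx) + \nabla_\vtheta f(\vx)(\hat{\vtheta}_t - \vtheta^\ast) + \Delta f(\vx)$ for $t \geq t_s$; a direct check shows the function-space dynamics (\ref{eq: Linearized dynamics perturbed}) is equivalent to the parameter gradient flow $\dt \hat{\vtheta}_t = -\eta\,\E_{\vx'}[\nabla_\vtheta f(\vx')^\top \ell'(\hat{f}_t(\vx'))]$. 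Subtracting the two function-value expressions at $\vz$ and at an arbitrary training point $\vx \in X$ yields the telescoping identity
\begin{equation*}
f_T(\vz) - \hat{f}_T(\vz) = [\nabla_\vtheta f(\vz) - \nabla_\vtheta f(\vx)](\vtheta_T - \hat{\vtheta}_T) + [f_T(\vx) - \hat{f}_T(\vx)] - \Delta f(\vz) + \Delta f(\vx).
\end{equation*}
The triangle inequality with A3 (bounding the two $\Delta f$ terms by $2\alpha$) and A4 (bounding the bracketed training-point term by $\beta$) reduces (\ref{eq: original uncertainty bound}) to showing $\|\vtheta_T - \hat{\vtheta}_T\| \leq C$, then taking $\inf_{\vx \in X}$.

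To bound the parameter gap, I integrate the difference of the two parameter flows from $t_s$ to $T$ and apply Cauchy–Schwarz in $\E_{\vx'}$ together with A2, obtaining $\|\vtheta_T - \hat{\vtheta}_T\| \leq \eta L\,\bar{\Theta}_X^{1/2} \int_{t_s}^T \|g_s\|_X\,ds$ with $g_s := f_s - \hat{f}_s$. To close this, I view the function-level residual dynamics $\dt g_t(\vx) = -\eta\,\E_{\vx'}[\Theta(\vx,\vx')(\ell'(f_t(\vx')) - \ell'(\hat{f}_t(\vx')))]$, take the $\X$-norm on both sides, and control the NTK factor via $\E_{\vx,\vx'}[\|\Theta(\vx,\vx')\|^2] = \|\mG\|_\mathrm{F}^2/N^2 \leq \|\mG\|^2/N = \lambda_{max}^2$; combined with A2 this yields $\dt\|g_t\|_X \leq \eta L \lambda_{max}\|g_t\|_X$. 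Since A3 gives $\|g_{t_s}\|_X = \|\Delta f\|_X \leq \alpha$, Gronwall's inequality produces $\|g_t\|_X \leq \alpha e^{\eta L \lambda_{max}(t - t_s)}$, and integrating on $[t_s, T]$ supplies the claimed $C$ and completes (\ref{eq: original uncertainty bound}).

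For (\ref{eq: bound-practical}), I apply A5 to the infimum term just established, giving
\begin{equation*}
\inf_{\vx \in X}\|\nabla_\vtheta f(\vz) - \nabla_\vtheta f(\vx)\|_\mathrm{F} \leq \bigl[\tr(\Theta(\vz,\vz) + \E_\vx[\Theta(\vx,\vx)]) - 2\,\E_\vx[\tr|\Theta(\vz,\vx)|]\bigr]^{1/2}.
\end{equation*}
The remaining step is to replace the inaccessible $\E_\vx[\tr|\Theta(\vz,\vx)|]$ by the observable $\|\nabla_\vtheta f_T(\vz)(\vtheta_T - \vtheta_{t_s})\|$. Integrating the unperturbed parameter flow from $t_s$ to $T$ and left-multiplying by $\nabla_\vtheta f_T(\vz)$ gives the representation $\nabla_\vtheta f_T(\vz)(\vtheta_T - \vtheta_{t_s}) = -\eta\int_{t_s}^T \E_{\vx'}[\Theta(\vz,\vx')\,\ell'(f_s(\vx'))]\,ds$; the bounds $\|\Theta(\vz,\vx')\| \leq \tr|\Theta(\vz,\vx')|$ (nuclear dominates operator norm) together with the uniform bound on $\|\ell'\|$ from A1 yield $\E_\vx[\tr|\Theta(\vz,\vx)|] \geq K\,\|\nabla_\vtheta f_T(\vz)(\vtheta_T - \vtheta_{t_s})\|$ for an explicit $K$. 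Substituting back and absorbing the $\vz$-independent constants $2\alpha + \beta$ into the $\lesssim$ notation gives (\ref{eq: bound-practical}) with $J := C$.

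The main technical obstacle I anticipate is the function-space Gronwall step: controlling $\E_{\vx,\vx'}[\|\Theta(\vx,\vx')\|^2]$ through an optimal operator-norm bound on the generalized Gram matrix $\mG$, and justifying the interchange of $\dt$ with $\|\cdot\|_X$ so that the exponential rate $\eta L \lambda_{max}$ is tight. The passage between the function-valued perturbation $\Delta f$ and its parameter-level counterpart also requires care, so that A4 -- a statement purely about near-convergence on $X$ -- can be invoked cleanly at $t = T$ without coupling the two trajectories further.
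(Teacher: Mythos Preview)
Your proposal is correct and arrives at both (\ref{eq: original uncertainty bound}) and (\ref{eq: bound-practical}) by the same core ingredients as the paper: a Gr\"onwall bound on the training-set residual $\|f_t-\hat{f}_t\|_X$ with rate governed by $\lambda_{max}$, then an extension to the test point through the Jacobian gap $\|\nabla_\vtheta f(\vz)-\nabla_\vtheta f(\vx)\|_\mathrm{F}$ and the factor $\bar{\Theta}_X^{1/2}$; the second part uses the same H\"older-type control of $\E_\vx[\tr|\Theta(\vz,\vx)|]$ from below via the integrated parameter flow. Where you differ is in the \emph{decomposition}: the paper stays entirely in function space, defining $\Delta f_t(\vz):=\|(f_t(\vz)-f_t(\vx^\ast))-(\hat f_t(\vz)-\hat f_t(\vx^\ast))\|$ and bounding its time derivative through the kernel difference $\Theta(\vz,\vx)-\Theta(\vx^\ast,\vx)$; you instead introduce the perturbed parameter trajectory $\hat\vtheta_t$, write the exact telescoping identity at $t=T$, and reduce everything to bounding $\|\vtheta_T-\hat\vtheta_T\|$. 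Your route is marginally cleaner in two spots: the telescoping is pure algebra, so you sidestep the $\partial_t\leftrightarrow\|\cdot\|_X$ interchange you flag as an obstacle (the paper handles it by triangle inequality); and your direct integration for (\ref{eq: bound-practical}) avoids the paper's appeal to a mean-value theorem for integrals on a vector-valued integrand. The paper's function-space argument, on the other hand, never needs to posit that the functional perturbation $\Delta f$ sits consistently on top of a parameter trajectory---a step you handle correctly but which requires the check you sketch.
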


\begin{proof}
    With an arbitrarily chosen pivot point $\xx$ from the training set, it is possible to bound $\| f(\vz) - f(\xx) \|$ and $\| \hat{f}(\vz) - \hat{f}(\xx) \|$ by bounding the fluctuations on the training set.
    Eq.~\ref{eq: original uncertainty bound} then follows from assumption A4.
    Combine A5, linearized training and H\"older's inequality, we derive Eq.~\ref{eq: bound-practical}.
    \iffull
    Please check Sec.~\ref{appendix: proof-3.1} for details.
    \else
    Please check~\cite{tuliparxiv} for details.
    \fi
\end{proof}

We see that the bound Eq.~\ref{eq: original uncertainty bound} on the training fluctuation is dominated by the distance from test point $\vz$ to the training set $\X$ in the ``embedding space'' of gradients. It is then possible to obtain a bound on such a distance using trained network parameters as in Eq.~\ref{eq: bound-practical}. A key insight here is that the gradient information during the training process has been accumulated into the trained weights.

\subsubsection{Network Ensemble} We close this section by the fact that
\begin{equation}
    \tr(\Var_{\Delta f}[
        \hat{f}_T(\vz)
    ]) \leq
    \E_{\Delta f}[
        \| \hat{f}_T(\vz) - f_T(\vz) \|^2
    ], \label{eq: variance-target}
\end{equation}
which can be then bounded by Eq.~\ref{eq: bound-practical}.
As we stated before, $\hat{f}_T(\vz)$ can be seen as samples from the trained ensemble as in Eq.~\ref{eq: GT-uncertainty}.
In practice, it is often beneficial to obtain such samples.
In the next section, we will present a heuristic method to estimate $\hat{f}_T(\vz)$ by matching variances.

\section{Implementation}
\label{sec: impl}
In this section, we present the key implementation strategies that enhance the practical effectiveness of our method, TULiP, summarized in Alg.~\ref{alg: TULiP}.
We elaborate on its design in the following subsections by referring to lines in Alg.~\ref{alg: TULiP}.

In contrast to the linearized network $f(\vx; \vtheta)$, let $\ef_t(\vx; \vtheta)$ denote a network trained empirically.
Intuitively, trajectories of $f_t(\vx;\vtheta)$ and $\ef_t(\vx;\vtheta)$ is similar when $\vtheta^\ast = \vtheta_\mathrm{Init}$ with a small learning-rate~\cite{lee2019wide,geiger2020disentangling}.
Under a post-hoc setting, as only converged models are available, we take $t_s = 0$ and substitute $\vtheta_{t_s}$ with $\E\left[\vtheta_0\right] = \mathbf{0}$ (or other mean specified by initialization schemes, e.g., $\E[\gamma_0] = \mathbf{1}$ in BatchNorm layers) in our implementation.
We also use the empirical NTK $\tilde{\Theta}^\mathrm{emp}_T$ at convergence time as an approximation for the kernel $\Theta$ used in Eq.~\ref{eq: Linearized dynamics}:
\begin{equation}
    \label{eq: Layer-wise scaling}
     \tilde{\Theta}^\mathrm{emp}_T(\vz, \vx) := \nabla_\vtheta \ef_T(\vz) \nabla_\vtheta \ef_T(\vx)^\top \approx \Theta(\vz, \vx),
\end{equation}

We first introduce how we estimate Eq.~\ref{eq: bound-practical} using $\ef_T$ at $t = T$.
Then, we introduce the construction of surrogate posterior samples that greatly enhance our method.

\begin{algorithm}[t]
\caption{TULiP for Classifiers.}\label{alg: TULiP}
\textbf{Input}: Input $z \in \mathbb{R}^d$, trained parameters $\vtheta_T$, network $\ef(z;\vtheta) : \mathbb{R}^{d} \rightarrow \mathbb{R}^o$ \\
\textbf{Parameter}: Calibrated parameters $J, \Theta_{XX}$; Perturbation strength $\epsilon, \delta$; Parameter $\lambda$; Number of posterior samples $M$ \\
\textbf{Output}: Uncertainty score $U$
\begin{algorithmic}[1]
\State $\vtheta_{t_s} \gets \E{\vtheta_0}$

\For{$i = 1, \dots, M$}%
    \State Sample $v_i \in \mathbb{R}^{|\vtheta|}$ from $\mathcal{N}(0, \epsilon^2 \mI)$
    \State $\tilde{f}^{raw}_{i}(\vz) \gets \ef(z; \vtheta_T + v_i)$
\EndFor

\State $\widetilde{\Theta}_{\mathrm{Tr}}(\vz, \vz) \gets \frac{1}{M} \sum_i \|
		 \tilde{f}^{raw}_{i}(\vz) - \ef(\vz; \vtheta_T)
		 \|^2$ \Comment{Estimation of $\Tr \Theta(\vz,\vz)$}
\State {$D \gets \sqrt{o} \| \vphantom{\widetilde{\Theta}} \ef(\vz; \vtheta_T + \epsilon \delta (\vtheta_T - \vtheta_{t_s}) ) - \ef(\vz; \vtheta_T)\|$}
\State {$S \gets J^2 \cdot \left( \widetilde{\Theta}_{\mathrm{Tr}}(\vz, \vz) + \Theta_{XX} - \lambda D \right)$} \Comment{Estimation of Eq.~\ref{eq: bound-practical} up to square root}
\State $\gamma \gets \sqrt{\max ( S, 0 ) / \widetilde{\Theta}_{\mathrm{Tr}}(\vz, \vz) }$

\For{$i = 1, \dots, M$} \Comment{Surrogate posterior samples}
    \State $\tilde{f}_{i}(\vz) \gets (1 - \gamma) \ef(z; \vtheta_T) + \gamma \tilde{f}^{raw}_i(\vz) \vphantom{\left(\tilde{f}\right)}$
\EndFor

\State $U \gets \mathbb{H}_y (\frac{1}{M} \sum_i \mathrm{softmax}(\tilde{f}_i(\vz)))$
\end{algorithmic}
\end{algorithm}

\subsection{Estimation of Jacobian (Line 2 - 8)}
Estimating gradients explicitly is both time and memory-consuming, especially for networks with large output dimensions.
Fortunately, Eq.~\ref{eq: bound-practical} only contains $\vz$-dependent terms that can be computed via perturbations on model weights $\vtheta$.
Namely, for $\| \nabla_\vtheta f_T(\vz) \left(\vtheta_T - \vtheta_{t_s}\right) \|$:
\begin{equation}
    \label{eq: D-approximation}
    \lim_{\delta \rightarrow 0} \frac{1}{\delta} \left( \ef(z; \vtheta_T + \delta \left(\vtheta_T - \vtheta_{t_s}\right)) - \ef(z; \vtheta_T) \right) \approx \nabla_\vtheta f_T(\vz) \left(\vtheta_T - \vtheta_{t_s}\right)
\end{equation}
is used in line 7 of Alg.~\ref{alg: TULiP}, corresponds to a deterministic perturbation.
The additional $\sqrt{o}$ scaling arises naturally in the derivation of Thm.~\ref{theorem: direct gronwall}.

For $\Tr \Theta(\vz, \vz)$, we could estimate its value with Hutchinson's Trace Estimator~\cite{avron2011randomized} (line 2-6), corresponds to a Gaussian perturbation (noise) on $\vtheta$.
\begin{proposition}    \label{prop: Ozz}
 Suppose that $\ef$ is $\gamma$-smooth w.r.t. $\vtheta$, i.e.,
 $$\|\nabla_\vtheta\ef(\vz;\vtheta)-\nabla_\vtheta\ef(\vz;\vtheta')\|_{\mathrm{F}}\leq \gamma\|\vtheta-\vtheta'\|. $$
 Let $\rvv$ be a random variable such that
 $\E_{\rvv}[\rvv] = \boldsymbol{0}, \E_{\rvv}[\rvv \rvv^\top] = \epsilon^2 \mI$ and 
 $\E_{\rvv}[\|\rvv\|^k ] \leq  C_k\epsilon^k$ for $k=3,4$, where $C_k$ is a constant depending on $k$ and the dimension of $\rvv$. 
 Then, under A1, it holds that 
    \begin{equation}
        \label{eq: Ozz-trace-estimator}
	 \lim_{\epsilon \rightarrow 0} \frac{1}{\epsilon^2} 
        \E_{\rvv}\left[ \|
		 \ef(\vz; \vtheta_T + \rvv) - \ef(\vz; \vtheta_T)
		 \|^2 \right]
	=
	\tr \left(
        \tilde{\Theta}^\mathrm{emp}_T(\vz, \vz)
	    \right). 
    \end{equation}
\end{proposition}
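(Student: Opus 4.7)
The plan is to apply a first-order Taylor expansion of $\tilde{f}(\vz;\vtheta_T+\rvv)$ around $\vtheta_T$, isolate the leading quadratic-in-$\rvv$ term whose expectation gives the trace of the empirical NTK, and show the remainder contributes only $o(\epsilon^2)$ thanks to the higher-moment assumptions.

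First I would write
\begin{equation*}
\tilde{f}(\vz;\vtheta_T+\rvv)-\tilde{f}(\vz;\vtheta_T)=\nabla_\vtheta\tilde{f}(\vz;\vtheta_T)\,\rvv+R(\vz,\rvv),
\end{equation*}
where $R$ is the Taylor remainder. Because $\tilde{f}$ is $\gamma$-smooth in $\vtheta$, a standard argument (integral form of the remainder and the Cauchy-Schwarz inequality applied columnwise) yields the uniform bound $\|R(\vz,\rvv)\|\le \tfrac{\gamma}{2}\|\rvv\|^2$. Denote $J:=\nabla_\vtheta \tilde{f}(\vz;\vtheta_T)$ for brevity, so that $J J^\top=\tilde{\Theta}^{\mathrm{emp}}_T(\vz,\vz)$.

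Next I would expand the squared norm
\begin{equation*}
\|\tilde{f}(\vz;\vtheta_T+\rvv)-\tilde{f}(\vz;\vtheta_T)\|^2 = \rvv^\top J^\top J\,\rvv + 2\langle J\rvv, R(\vz,\rvv)\rangle + \|R(\vz,\rvv)\|^2,
\end{equation*}
and take the expectation term by term. For the leading term,
\begin{equation*}
\E_\rvv[\rvv^\top J^\top J\rvv] = \tr\!\big(J^\top J\,\E_\rvv[\rvv\rvv^\top]\big) = \epsilon^2\,\tr(J^\top J) = \epsilon^2\,\tr(J J^\top) = \epsilon^2\,\tr\!\big(\tilde{\Theta}^{\mathrm{emp}}_T(\vz,\vz)\big).
\end{equation*}
For the cross term, boundedness of $\nabla_\vtheta \tilde{f}$ from A1 gives $\|J\|\le M$ for some $M$, so that by Cauchy-Schwarz
$|\E_\rvv[\langle J\rvv, R\rangle]|\le \tfrac{M\gamma}{2}\E_\rvv[\|\rvv\|^3]\le \tfrac{M\gamma C_3}{2}\epsilon^3$. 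Similarly $\E_\rvv[\|R\|^2]\le \tfrac{\gamma^2}{4}\E_\rvv[\|\rvv\|^4]\le \tfrac{\gamma^2 C_4}{4}\epsilon^4$.

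Dividing through by $\epsilon^2$ and sending $\epsilon\to 0$, the cross term is $O(\epsilon)$ and the remainder-squared term is $O(\epsilon^2)$, so both vanish and the leading-order expression yields exactly $\tr(\tilde{\Theta}^{\mathrm{emp}}_T(\vz,\vz))$. The only subtlety I anticipate is verifying the remainder bound $\|R\|\le \tfrac{\gamma}{2}\|\rvv\|^2$ carefully from the Frobenius-norm version of $\gamma$-smoothness: one rewrites $R=\int_0^1[\nabla_\vtheta\tilde{f}(\vz;\vtheta_T+s\rvv)-\nabla_\vtheta\tilde{f}(\vz;\vtheta_T)]\rvv\,ds$ and bounds the induced 2-norm by the Frobenius norm to convert the smoothness assumption into a usable estimate on $\|R\|$. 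Once this is in place, the rest is routine moment bookkeeping.
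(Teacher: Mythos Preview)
Your proposal is correct and follows essentially the same approach as the paper: a first-order Taylor expansion with a $\gamma$-smoothness remainder bound, then term-by-term expectation using $\E_{\rvv}[\rvv\rvv^\top]=\epsilon^2\mI$ for the leading term and the third/fourth moment assumptions to kill the rest. The only cosmetic difference is that the paper bounds the remainder componentwise via the mean value theorem and then controls $\big|\sum_i a_i^2-\sum_i b_i^2\big|$ algebraically, whereas you use the integral form of the remainder and expand $\|J\rvv+R\|^2$ directly---your route is slightly cleaner and even gains a factor $1/2$ in the remainder constant.
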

Note that the multi-dimensional normal distribution with mean zero and variance-covariance matrix 
$\epsilon^2 \mI$ agrees to the condition of $\rvv$. 
Prop.~\ref{prop: Ozz} and the approximation Eq.~\ref{eq: Layer-wise scaling}
ensures that 
$\Tr \Theta(\vz,\vz)$ 
is approximated by
$\epsilon^{-2}\E_{\rvv}[\|\ef(\vz; \vtheta_T + \rvv) - \ef(\vz; \vtheta_T)\|^2]$ with a small $\epsilon$. 
\begin{proof}
    \iffull
    Please check Sec.~\ref{appendix: proof-4.1} for details.
    \else
    Please check~\cite{tuliparxiv} for details.
    \fi
\end{proof}

From above, $\vz$-relavent terms in Eq.~\ref{eq: bound-practical} can be approximated while avoiding explicit computation of $\nabla_\vtheta f(\vz)$.
Specifically, in line 8, $S$ provides an estimation of the upper-bound Eq.~\ref{eq: bound-practical} up to $\E_\vx[\Theta(\vx, \vx)]$, square root and constants.
Here, the hyper-parameter $\lambda$ acts as a proxy to the constant $K$.
Such approximation is implemented by perturbations to $\vtheta$, thus compatible with mini-batching, enabling fast computation with $\mathcal{O}(M)$ forward passes.

\subsection{Calibration on Validation Dataset}
In practice, a validation dataset $X_\mathrm{Val}$ of ID data is often available to produce reliable results~\cite{zhang2023openoodv15}.
Such data is valuable in finding optimal $J$ and $\E_{\vx \sim P_X}[\Theta(\vx, \vx)]$.

$\E_{\vx \sim P_X}[\Theta(\vx, \vx)]$ can be estimated straight-forwardly by using the empirical expectation on $X_\mathrm{Val}$.
To find $J$, we first construct $\tilde{f}_i(\vz)$ on $\vz \in X_\mathrm{Val}$ following the step shown in Alg.~\ref{alg: TULiP} (line 1-11) with chosen $\epsilon, \lambda$ and $J = 1$.
$\tilde{f}_i(\vz)$ for varying $J$s can be easily obtained by controlling the scaling factor $\gamma$.
Therefore, we find the optimal non-negative $J^\ast$ by maximizing the likelihood of $\frac{1}{M} \Sigma_i \mathrm{softmax}(\tilde{f}_i(\vz))$ over $\vz \in X_\mathrm{Val}$.

For OOD detection however, we empirically found that a bigger $J$ sometimes gives better results.
Therefore, we set $J = J_\mathrm{scaling} \cdot J^\ast$ with hyper-parameter $J_\mathrm{scaling} \geq 1$ in practice.

\subsection{Surrogate Posterior Envelope (SPE) (Line 9 - 13)}
\label{subsec: surrogate-ensemble}
Many OOD detection methods work with the predictions produced by a neural network~\cite{yang2024generalized}.
As shown later in Table~\ref{table: OpenOOD with baselines}, a sufficiently trained model's raw prediction output (MSP) is already a robust OOD estimator.
Such OOD detection capability can be combined with Thm.~\ref{theorem: direct gronwall} to further improve TULiP's performance.

Intuitively, a well-trained classifier can be certain that a test datum $\vz$ belongs to neither class, in the sense that an evaluation of Eq.~\ref{eq: bound-practical} yields a small value, but its prediction $\mathrm{softmax}(f_T(\vz))$ indicates OOD input (i.e., it belongs to neither class).
To this end, borrowing ideas from the Model Averaging literature~\cite{gal2016dropout,DBLP:conf/nips/Lakshminarayanan17}, we propose to use the averaged prediction of the \emph{hypothetically-perturbed-then-trained} models $\E_{\Delta f} [\mathrm{softmax}(\hat{f}(\vz))]$ for OOD detection.

The distribution of $\hat{f}(\vx)$ is intractable without access to the training process.
Fortunately, we may approximate it via the bound Eq.~\ref{eq: bound-practical} by constructing an envelope around the true distribution, as depicted in Fig.~\ref{fig: illustration-tulip}.
From line 11, it is possible to show
\begin{equation}
\label{eq: pushed}
    \tr(\Var_i[\tilde{f}_i(\vz)]) \approx \gamma^2 \cdot \tr(\Var_i[\tilde{f}^{raw}_i(\vz)]) \; = S,
\end{equation}
for a positive $S$ and small $\epsilon$ such that $\E_i[\tilde{f}^{raw}_i(\vz)] \approx \ef(\vz; \vtheta_T)$.
Note that $\gamma$ is given in line 9 of Alg.~\ref{alg: TULiP}, and $S$ is an estimate of Eq.~\ref{eq: bound-practical} as stated in the previous subsection.
\iffull
Sec.~\ref{appendix: detail-4.3}
\else
\cite{tuliparxiv}
\fi
provides additional derivations to clarify their relationships.

For classification problems, after obtaining $\vv := \E_{\Delta f} [\mathrm{softmax}(\hat{f}(\vz))]$, TULiP applies the information entropy $\mathbb{H}_y [\vv]:= - \sum_{y=1}^o \vv_y \log \vv_y$ to produce OOD score as shown in line 13 of Alg.~\ref{alg: TULiP}.
Other methods, such as GEN~\cite{10203747}, can also be naturally incorporated into TULiP.

Alg.~\ref{alg: TULiP} summarizes TULiP, our proposed uncertainty estimator for OOD detection.
Although Alg.~\ref{alg: TULiP} gives TULiP for classification, it naturally generalizes to non-classification problems as TULiP constructs surrogate posterior samples.
In practice, TULiP can be accessed via: 
1) Obtain the trained model on ID dataset,
2) Calibrate $J$ and $\Theta_{XX}$ on a small validation dataset and
3) Apply Alg.~\ref{alg: TULiP} on test data.

\begin{figure}[t!]
\centering
\begin{adjustbox}{width=\textwidth}
\begin{subfigure}[b]{0.45\textwidth}
\centering
    \includegraphics{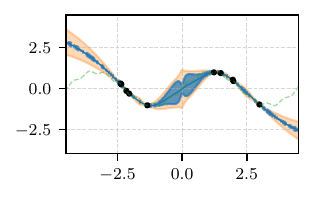}
    \caption{Regression on Toy dataset}
\end{subfigure}
\begin{subfigure}[b]{0.45\textwidth}
\centering
    \includegraphics{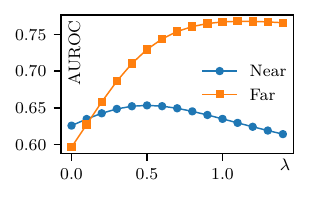}
    \caption{Raw performance of Eq.~\ref{eq: bound-practical}}
\end{subfigure}
\begin{subfigure}[b]{0.35\textwidth}
    \centering
    \includegraphics{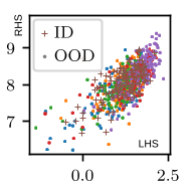}
    \caption{Verification of A5}
\end{subfigure}
\end{adjustbox}
\caption{Verification of Thm.~\ref{theorem: direct gronwall} (Sec.~\ref{subsec: empirical-validation}). From left to right, (a): Regression on Splines. Light shade: the bound Eq.~\ref{eq: original uncertainty bound}, heavy shade: Ground-truth ensemble (Eq.~\ref{eq: GT-uncertainty}), black dots: training data. (b): OOD Detection capability of Eq.~\ref{eq: bound-practical} on ImageNet-1K setup, using $S$ in Alg.~\ref{alg: TULiP} (line 13, used to estimate Eq.~\ref{eq: bound-practical}) as OOD scoring function. (c): Verification of A5 on ImageNet-1K setup.}
\label{fig: toy-datasets}
\end{figure}

\section{Experiments}

A key assumption of TULiP is linearized training.
However, it raises practical concerns, as empirical deep networks are typically trained non-linearly~\cite{NEURIPS2020_40507569}.
In this section, we demonstrate that TULiP nevertheless works under practical setups, by investigating TULiP in OOD tasks and comparing it with state-of-the-art OOD detectors.

\subsubsection{Experiment Setup}
We evaluate the practical performance of TULiP with OOD detection tasks based on manually defined ID-OOD dataset pairs, following the OpenOOD v1.5 benchmark~\cite{zhang2023openoodv15}.
OOD data range across a collection of diverse image datasets, categorized into \emph{near} and \emph{far} OOD sets~\cite{yang2022openood}, where near is more similar to ID and therefore more difficult to distinguish.
Following their setup, we use the same pre-trained ResNet-18~\cite{7780459} for CIFAR-10 \& 100~\cite{krizhevsky2009learning} and ImageNet-200~\cite{zhang2023openoodv15} ID datasets, and ResNet-50 for ImageNet-1K~\cite{imagenet15russakovsky}.

For TULiP, we use $M = 10$ surrogate posterior samples with $\epsilon = 0.005$, $\delta = 8$ and $\lambda = 1.25$.
We perturb all available parameters in the network including biases and Batch Normalization parameters $\beta, \gamma$~\cite{batchnorm}.
Following~\cite{zhang2023openoodv15}, we conduct a hyper-parameter search on a small validation set whenever possible, within a reasonable range of $J_\mathrm{scaling} \in \{1.0, 1.25, 1.5, 1.75, 2.0\}$.
\iffull
We explain our choice for hyper-parameters in Sec.~\ref{appendix: hparams}.
We present details of all datasets in Sec.~\ref{appendix: datasets}.
\else
\cite{tuliparxiv} explains the choice of hyper-parameters and details of all datasets.
\fi

\label{sec: experiments}
\subsection{Empirical Validation for Section~\ref{sec: method}}
\label{subsec: empirical-validation}

\subsubsection{Synthetic Datasets}
We begin by validating the original bound presented in Eq.~\ref{eq: original uncertainty bound} using toy regression data.
A 3-layer infinite-wide feed-forward neural network is used, and we solved the lazy training dynamics over the dataset using the \emph{neural-tangents} library~\cite{neuraltangents2020}.
We used MSE loss and computed the exact Gaussian ensemble~\cite{lee2019wide}.
Results are shown in Fig.~\ref{fig: toy-datasets}(a).
It suggests that our bound Eq.~\ref{eq: original uncertainty bound} based on training fluctuations is able to capture the true epistemic uncertainty as in Eq.~\ref{eq: GT-uncertainty}, justifies further developments of our method.

\subsubsection{Eq.~\ref{eq: bound-practical} in Practice}

\begin{figure}[t!]
\centering
\begin{subfigure}[b]{0.28\textwidth}
\centering
    \includegraphics[width=\textwidth]{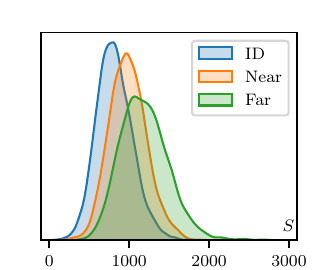}
    \caption{Eq.~\ref{eq: bound-practical} dist.}
\end{subfigure}
\begin{subfigure}[b]{0.35\textwidth}
\centering
    \includegraphics[width=\textwidth]{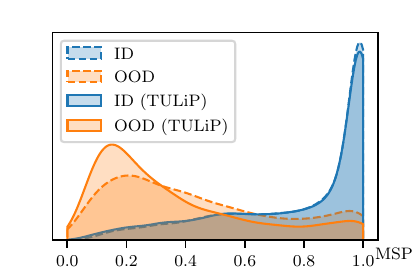}
    \caption{TULiP on MSP}
\end{subfigure}
\begin{subfigure}[b]{0.35\textwidth}
\centering
    \includegraphics[width=\textwidth]{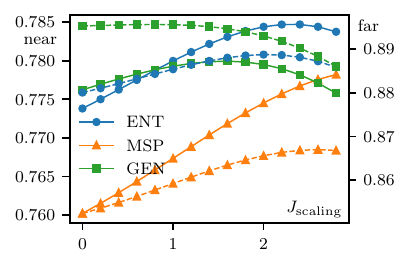}
    \caption{TULiP's enhancing effect}
\end{subfigure}
\caption{
(a) Visualization of $S$ in Alg.~\ref{alg: TULiP} (line 13, used to estimate Eq.~\ref{eq: bound-practical}) in various datasets (left to right: ID (ImageNet-1K), NINCO (near), Textures (far)).
(b) Visualization of the MSP score before and after applying TULiP on ID (right) and OOD (left). Notice how OOD data is pushed further towards 0 (lower confidence) than ID. We chose MSP due to its simplicity.
(c) Effect of TULiP on various OOD scoring criteria for near-OOD (solid) and far-OOD (dashed). $J_\mathrm{scaling}$ (horizontal axis) controls the strength of TULiP, where a value of $0$ indicates baselines without TULiP.
}
\label{fig: tulip-dev}
\end{figure}

We further validate the proposed bound Eq.~\ref{eq: bound-practical} by its OOD detection performance in practical scenarios.
Using the ImageNet-1K setup, Fig.~\ref{fig: toy-datasets}(b) shows the positive OOD detection capability of Eq.~\ref{eq: bound-practical}.
In practice, it is estimated using $S$ in Alg.~\ref{alg: TULiP} (line 13).

By setting $\lambda = 0$, we effectively recover an OOD scoring criteria similar to GradNorm~\cite{GradNorm_NeurIPS}.
GradNorm computes the OOD score based on the gradient of KL divergence between softmax-ed prediction and a uniform distribution, where ID data is claimed to have larger gradient compared to OOD data.
In contrast, our OOD score in Fig.~\ref{fig: toy-datasets}(b) (when $\lambda = 0$), is computed from the Frobenius norm of gradient of \emph{logits} w.r.t. network parameters evaluated at given input.
In logit space, it is shown that the OOD data instead have a slight \emph{larger} gradient compared to the ID data (Fig.~\ref{fig: toy-datasets}(b), when $\lambda = 0$, AUROC $> 0.5$ indicates that the OOD data have larger gradients on average).

Furthermore, the performance of Eq.~\ref{eq: bound-practical} is boosted when using a positive $\lambda$, supports our claim that the parameter-gradient product $\|\nabla_\vtheta f_T(\vz) (\vtheta_T - \vtheta_{t_s})\|$ provides useful information of training dataset as Thm.~\ref{theorem: direct gronwall} suggested.
Especially in the \emph{far-OOD} split, such performance boost of the $\lambda$-term is surprisingly significant (improved AUROC over 15\%).
\emph{Far-OOD} contains images that are different from ID more significantly.
It could potentially be an easy case for Eq.~\ref{eq: bound-practical}, in the sense that a relatively looser bound (as having no access to training data), especially $\|\nabla_\vtheta f_T(\vz) (\vtheta_T - \vtheta_{t_s})\|$, can still retrieve enough training information for OOD detection, as discussed in Sec.~\ref{subsec: bounding-fluctuations}.
Note that despite being far, \emph{far-OOD} data still satisfy the closeness assumption A5 as shown in Fig.~\ref{fig: toy-datasets}(c).

\subsection{Out-of-distribution Detection}
\label{subsec: main-experiment}

\subsubsection{Baseline Methods} 
Shannon Entropy (ENT) computes the entropy of plain network prediction.
ENT is an important baseline, as TULiP works by providing it with TULiP-enhanced predictions (average from SPE).
We also consider various baselines other than ENT for comparison, including the MC-Dropout (MCD), post-hoc OOD methods without training data ODIN, EBO, MLS, ASH and GEN; and finally, MDS and ViM with access to training data.
Please refer to Sec.~\ref{sec: related} for a brief introduction.

\subsubsection{Results}
We report the performance of TULiP on OpenOOD v1.5 benchmark~\cite{zhang2023openoodv15} in Table~\ref{table: OpenOOD with baselines}.
TULiP consistently improves over the ENT on both near- and far-OOD setups.
It achieves remarkable performance in near-OOD settings with either top-1 or top-2 AUROC scores across all datasets, and is on par with most methods in far-OOD settings.
It is essential to note that methods that significantly outperform TULiP on far-OOD either access the training dataset (ViM and MDS) or completely lack a theoretical explanation (ASH).
As we discuss below, TULiP is also more consistent compared to other methods such as ASH.

\begin{table}[t!]
\centering
\caption{Results on OpenOOD benchmark, averaged from 3 runs. The top results for each category are marked in bold, with the second-best result in underlined. We include baseline results from~\cite{zhang2023openoodv15}, and reproduced the results for MC-Dropout (MCD). A dagger symbol $\dag$ indicates direct access to training data or processes.
Results are averaged separately for \emph{near} / \emph{far}-OOD sets.
}
\label{table: OpenOOD with baselines}
\begin{adjustbox}{width=\textwidth}
\begin{tabular}{l c c c c c c c c} 
\toprule
    & \multicolumn{2}{c}{CIFAR-10}      & \multicolumn{2}{c}{CIFAR-100}     & \multicolumn{2}{c}{ImageNet-200}  & \multicolumn{2}{c}{ImageNet-1K}   \\ [0.2ex]
      Method                    & FPR@95 $\downarrow$ & AUROC $\uparrow$ & FPR@95 $\downarrow$ & AUROC $\uparrow$ & FPR@95 $\downarrow$ & AUROC $\uparrow$ & FPR@95 $\downarrow$ & AUROC $\uparrow$ \\
      \midrule 
      MCD $\dag$              &53.54/31.43&87.68/91.00&\underline{54.73}/59.08&80.42/77.58&55.25/35.48&{83.30}/90.20&65.68/51.45&76.02/85.23\Tstrut\\
      MDS $\dag$              &49.90/32.22&84.20/89.72&83.53/72.26&58.69/69.39&79.11/61.66&61.93/74.72&85.45/62.92&55.44/74.25 \\
      ViM $\dag$              &\underline{44.84}/\textbf{25.05}&\underline{88.68}/\textbf{93.48}&62.63/\textbf{50.74}&74.98/\textbf{81.70}&59.19/\textbf{27.20}&78.68/91.26&71.35/\underline{24.67}&72.08/92.68 \\ 
      \midrule
      MSP               &48.17/31.72&88.03/90.73&54.80/58.70&80.27/77.76&\underline{54.82}/35.43&83.34/90.13&65.68/51.45&76.02/85.23 \\
      ODIN              &76.19/57.62&82.87/87.96&57.91/58.86&79.90/79.28&66.76/34.23&80.27/91.71&72.50/43.96&74.75/89.47 \\
      MLS               &61.32/41.68&87.52/91.10&55.47/56.73&\underline{81.05}/79.67&59.76/34.03&82.90/91.11&\textbf{51.35}/63.60&76.46/89.57 \\
      ASH               &86.78/79.03&75.27/78.49&65.71/59.20&78.20/\underline{80.58}&64.89/\underline{27.29}&82.38/\textbf{93.90}&\underline{63.32}/\textbf{19.49}&\underline{78.17}/\textbf{95.74} \\
      GEN               &53.67/47.03&88.20/91.35&\textbf{54.42}/56.71&\textbf{81.31}/79.68&55.20/32.10&\underline{83.68}/91.36&65.32/35.61&76.85/89.76 \\
      EBO               &61.34/41.69&87.58/91.21&55.62/56.59&80.91/79.77&60.24/34.86&82.50/90.86&68.56/38.39&75.89/89.47 \\
      ENT               &48.17/31.71&88.12/90.84&54.81/58.69&80.63/78.14&\underline{54.82}/35.34&83.55/90.52&65.63/51.28&76.55/86.21 \\
      \midrule
      ReAct (EBO)       &63.56/44.90&87.11/90.42&56.39/\underline{54.20}&80.77/80.39&62.49/28.50&81.87/\underline{92.31}&66.69/26.31&77.38/\underline{93.67} \\
      \rowcolor{Gainsboro}
      TULiP (ENT)             &\textbf{40.77}/\underline{28.40}&\textbf{89.36}/\underline{92.25}&56.21/57.94&80.81/78.75&\textbf{54.75}/33.62&\textbf{83.84}/91.05&64.96/48.01&\textbf{78.38}/88.85 \\ 
      \bottomrule
\end{tabular}
\end{adjustbox}
\end{table}

\begin{table}[t]
    \centering
    \scriptsize
    \caption{Extended results on ImageNet-1K (in AUROC $\uparrow$). Some baseline results cited from~\cite{zhang2023openoodv15}. Evaluating ASH on OpenOOD framework~\cite{yang2022openood} with unsupported network architectures (MobileNet, VGG-16, RegNet) is difficult and thus omitted.}
    \label{table: extended-results}
    \begin{tabular}{l c c c c c} 
    \toprule
        Method    &ResNet-50(V2)& ViT-16-B         & MobileNet-V3-Large   & VGG-16      & RegNet-Y-16GF      \\
    \midrule
        EBO       & 54.39/51.36 & 62.41/78.98 & 71.00/77.46 & 72.16/89.92 & 67.50/69.98 \\
        ODIN      & 69.89/66.10 & 64.32/76.06 & 70.09/79.63 & 70.86/90.83 & 64.93/65.41 \\
        MLS       & 69.24/76.47 & 68.30/83.54 & 74.39/82.67 & 72.75/89.78 & 72.07/78.45 \\
        ASH       & 35.96/29.91 & 53.21/51.56 &     -/-     &     -/-     &     -/-     \\
        TULiP     & 66.77/74.35 & 72.96/87.42 & 75.20/84.40 & 74.54/87.51 & 74.54/83.52 \\
    \bottomrule
    \end{tabular}
\end{table}

\subsubsection{Consistency}
To further validate the consistency of TULiP, we conduct ImageNet-1K OOD experiments with various network architectures and training protocol, using pretrained models available online~\cite{torchvision2016}.
We report the results in Table~\ref{table: extended-results}.
TULiP consistently achieves top-class performance across various setups.
Notably, ASH fails when ResNet-50(V2) weights are used.
V2 weights are trained with recent advances in practical NN training, increasing the Top-1 accuracy by 4.7\%~\cite{IMAGENET1K_V2}.
Such a complicated training process may render difficulties when applying methods like ASH without theoretical guidance.

\subsubsection{TULiP Enhancing Baseline Methods}
The proposed algorithm Alg.~\ref{alg: TULiP} works with Shannon Entropy to compute an OOD score from averaged predictions.
Such choice is due to the superior performance of the entropy score when paired with TULiP, as shown in Table~\ref{table: OpenOOD with baselines}.
Indeed, other methods can also work with TULiP's averaged prediction, as TULiP enhances their performance on OOD tasks.
We demonstrate such an enhancing effect using MSP score (i.e., raw confidence of predicted class) in Fig.~\ref{fig: tulip-dev}(b), using ImageNet-1K (ID) and Textures (OOD).
With weight perturbations, TULiP pushes original network predictions towards a more uniform distribution, with an increased effect particularly on OOD data.
We compare the enhancing effect of various methods in Fig.~\ref{fig: tulip-dev}(c).

ReAct~\cite{ReAct_DBLP:journals/corr/abs-2111-12797} is similar to TULiP as an enhancement method (in this case, over EBO), while TULiP provides a more consistent improvement in all datasets.
TULiP can also be combined with ReAct as they operate on different aspects of a network, yet the details of such a combination need further investigation.

\begin{figure}[t!]
\centering
\begin{adjustbox}{width=0.9\textwidth}
\includegraphics{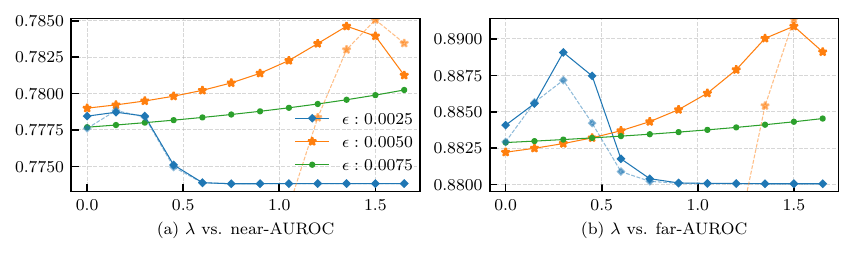}
\end{adjustbox}
\caption{Results by varying $\epsilon$ and $\lambda$ on ImageNet-1K ID. Solid lines: A validation set is used to find the the optimal $J$ with $J_\mathrm{scaling} = 2.0$. Dashed lines: No validation set calibration for $J$ (results for $\epsilon = 0.0075$ falls out of the range of the plot).}
\label{fig: ablation-hparam}
\end{figure}

\subsubsection{Ablation Study and Hyper-parameters}
We conduct experiments on ImageNet-1K to analyze the effect and consistency of hyper-parameters.
The results are shown in Fig.~\ref{fig: ablation-hparam}, where we observe a consistent behavior of the parameter $\lambda$ in near- and far-OOD setups.
Given a fixed $\epsilon$, the optimal $\lambda$ for near- and far-OOD setups is similar, simplifies the hyper-paremter tuning process.
It also further validates the positive effect of the $\lambda$-term in TULiP for OOD detection.
As shown in Fig.~\ref{fig: ablation-hparam}, a slight larger $\epsilon$ can also increase the stability of TULiP w.r.t. changes in $\lambda$, where larger $\epsilon$ favors larger $\lambda$ (also corresponds to smaller $J^\ast$).
In practice, the calibration step of finding an optimal $J$ ensures consistent performance across various setups, as shown in Fig.~\ref{fig: ablation-hparam} with dashed lines.

\section{Conclusion}

In this study, we present TULiP, an uncertainty estimator for OOD detection.
Our method is driven by the fluctuations under linearized training dynamics and excels in practical experiments.
However, there are some limitations and future works remaining.
Theoretically, our framework only considers functional perturbation. The perturbation on the NTK is also important~\cite{kobayashi2022disentangling} and could be integrated into the estimator in the future.
Future works may improve upon these aspects, covering a wider range of OOD data by examining the network parameters and refining weight perturbations. 
In a broader aspect, exploring TULiP in other learning paradigms, such as Active Learning~\cite{NEURIPS2022_a102dd59} or Reinforcement Learning~\cite{10.5555/1855083} will be valuable.

\begin{credits}
\subsubsection{\ackname} 
This work was partially supported by JSPS KAKENHI Grant Numbers 20H00576, 23H03460, and 24K14849.

\subsubsection{\discintname}
The authors have no competing interests to declare that are relevant to the content of this article.

\end{credits}

{
    \section*{\refname}
    \small
    \let\clearpage\relax
    \renewcommand{\section}[2]{}
    \renewcommand{\chapter}[2]{}
    \bibliographystyle{splncs04}
    \bibliography{ref_tidy_abbrv}
}

\appendix

\section{Proofs}
\subsection{Basic notations}
For a network $f(\vx): \mathbb{R}^d \rightarrow \mathbb{R}^o$ maps inputs $\vx$ of dimension $d$ to outputs $f(\vx)$ of dimension $o$, parameterized by $\vtheta$ with $|\vtheta|$ trainable parameters, the gradient / Jacobian matrix $\nabla_\vtheta f(\vx)$ is a $o \times |\vtheta|$ matrix. 

The NTK $\Theta(\vz, \vx) := \nabla_\vtheta f(\vz) \nabla_\vtheta f(\vx)^\top$ is a $o \times o$ matrix.

$\ell' (f_t(\vx))$ is the gradient of loss function w.r.t. network output $f_t(\vx)$ at training time $t$. It is, for convenience, a $o \times 1$ column-vector.

The following lemma will be useful thereafter, which is an application of H\"older's inequality.

\begin{lemma}
    Let $F: x \rightarrow \mathbb{R}^{m \times n}$, $g: x \rightarrow \mathbb{R}^n$. Consider 2-norms $\| \cdot \|$ (i.e., euclidean and its induced matrix 2-norm). For $p, q \in [1, \infty]$ that $\frac{1}{p} + \frac{1}{q} = 1$, we have
    \begin{align*} 
        & \; \left\|\E_x[F(\vx) g(\vx)]\right\| \\ 
        \leq & \; \E_x[\|F(\vx) g(\vx)\|] \\
        \leq & \; \E_\vx\left[\|F(\vx)\| \cdot \|g(\vx)\|\right] \\
        \leq & \; \E_\vx\left[\|F(\vx)\|^p\right]^{1 / p} \cdot \E_\vx\left[\|g(\vx)\|^q\right]^{1 / q}.
    \end{align*}
\end{lemma}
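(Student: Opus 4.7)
\subsection*{Proof plan for the final lemma}

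The statement is a chain of three elementary inequalities, each of which is a standard fact; the plan is simply to identify which standard fact justifies each step and apply it. I will handle the three inequalities in the order they appear in the display.

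\textbf{Step 1 (Jensen / triangle inequality for expectations).} For the first inequality $\|\E_\vx[F(\vx)g(\vx)]\|\le \E_\vx[\|F(\vx)g(\vx)\|]$, I will invoke convexity of the Euclidean norm. Since $\|\cdot\|$ is a convex function on $\mathbb{R}^m$, Jensen's inequality gives $\|\E_\vx[h(\vx)]\|\le \E_\vx[\|h(\vx)\|]$ for any integrable vector-valued $h$; setting $h(\vx)=F(\vx)g(\vx)$ yields the claim. (Equivalently, this is the triangle inequality for the Bochner/Lebesgue integral.)

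\textbf{Step 2 (Sub-multiplicativity of the induced 2-norm).} For the second inequality $\E_\vx[\|F(\vx)g(\vx)\|]\le \E_\vx[\|F(\vx)\|\cdot\|g(\vx)\|]$, I will use pointwise the defining property of the induced matrix 2-norm: for every $\vx$,
\begin{equation*}
\|F(\vx)g(\vx)\|\le \|F(\vx)\|\cdot \|g(\vx)\|,
\end{equation*}
and then take expectations on both sides, using monotonicity of $\E_\vx$.

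\textbf{Step 3 (Hölder's inequality for scalar random variables).} For the third inequality, the integrands $u(\vx):=\|F(\vx)\|$ and $v(\vx):=\|g(\vx)\|$ are nonnegative scalar functions of $\vx$. Applying Hölder's inequality with conjugate exponents $p,q\in[1,\infty]$, $\tfrac1p+\tfrac1q=1$, to $u$ and $v$ under the expectation $\E_\vx$ gives
\begin{equation*}
\E_\vx[u(\vx)v(\vx)]\le \E_\vx[u(\vx)^p]^{1/p}\cdot \E_\vx[v(\vx)^q]^{1/q},
\end{equation*}
which is exactly the third inequality. The endpoint cases $p=1,q=\infty$ (and vice versa) follow from the standard convention $\E_\vx[w(\vx)^\infty]^{1/\infty}=\mathrm{ess\,sup}\,w$.

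\textbf{Main obstacle.} There is essentially no obstacle: each step is a textbook inequality (Jensen, sub-multiplicativity, Hölder). The only point requiring a brief remark is tacit integrability: all expectations are assumed to exist (which is implicit in the statement), so no measure-theoretic subtlety arises. Concatenating the three inequalities establishes the full chain claimed in the lemma.
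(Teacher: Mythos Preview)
Your proposal is correct and matches the paper's approach: the paper simply states the lemma as ``an application of H\"older's inequality'' without giving a detailed proof, and your three-step breakdown (Jensen for the norm, sub-multiplicativity of the induced 2-norm, then scalar H\"older) is exactly the standard argument this phrase abbreviates.
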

When $q = \infty$, we have $\E_\vx\left[\left\| g(\vx) \right\|^q\right]^{1/q} := \sup_x \left\| g(\vx) \right\|$.

For convenience, given any random variable, vector or matrix $\mathbf{A}$ dependent of $\vx$, we denote:
\begin{equation}
    \left\| \mathbf{A} \right\|^{(q)}_{\X} := \E_{\vx} \left[ \|\mathbf{A}\|^q \right]^{1/q},
\end{equation}
which by itself is a valid norm.
We omit superscript $(q)$ if $q = 2$.

\subsection{Assumptions}
We recall the assumptions here, which are originally shown in Sec.~\ref{sec: method}.
For network $f(\vx, \vtheta)$, dataset $X$ with no parallel datapoints and a twice-differentiable loss function $\ell$, we assume the followings:
\begin{enumerate}[label=A\arabic*., leftmargin=*]
    \item (Boundedness) For $t \in [0, T]$, $f(\vx)$, $\nabla_\vtheta f(\vx)$, $\ell$ and $\ell'$ stay bounded, uniformly on $\vx$.
    \item (Smoothness) Gradient $\ell'$ of loss function $\ell$ is Lipschitz continuous: $\forall x \in X;\;\| \ell'(\hat{y}; y(\vx)) - \ell'(\hat{y}'; y(\vx)) \| \leq L \|\hat{y} - \hat{y}'\|$.
    \item (Perturbation) The perturbation $\Delta f$ can be uniformly bounded by a constant $\alpha$, that is, for all $\vx$ (not limited to the support of training data), i.e.,  $\forall x \in \mathbb{R}^d;\;\| \Delta f(\vx) \| \leq \alpha$.
    \item (Convergence) Finally, for the original network trained via~\eqref{eq: Linearized dynamics} and the perturbed network trained via~\eqref{eq: Linearized dynamics perturbed}, we assume \emph{near-perfect convergence} on the training set $\vx$ at termination time $t = T$, i.e., 
    $\exists \beta \in \mathbb{R}, \forall x \in X;\;\|f_T(\vx) - \hat{f}_T(\vx)\| \leq \beta$.
\end{enumerate}

\subsection{Proof of Equation 5}
\label{appendix: proof-3.1}

We proof Thm.~\ref{theorem: direct gronwall} (\eqref{eq: original uncertainty bound} and \eqref{eq: bound-practical}) separately.

\begin{theorem} \label{theorem: direct gronwall replicated}
    (Equation~\ref{eq: original uncertainty bound})
    Under assumptions A1-A4, for a network $f$ trained with~\eqref{eq: Linearized dynamics} and a perturbed network $\hat{f}$ trained with~\eqref{eq: Linearized dynamics perturbed}, the perturbation applied at time $t_s = T - \Delta T$ bounded by $\alpha$, we have
    \begin{equation}
        \label{eq: original uncertainty bound replicated}
        \| f_T(\vz) - \hat{f}_T(\vz) \| \leq \inf_{\vx \in X} C \| \nabla_\vtheta f(\vz) - \nabla_\vtheta f(\vx) \|_\mathrm{F} + 2 \alpha + \beta,
    \end{equation}
    where $C = \frac{\alpha \eta \bar{\Theta}_X^{1/2}}{\lambda_{max}}\left( e^{(T - t_s)L\lambda_{max}} - 1 \right)$,
    $\bar{\Theta}_X^{1/2} := \left\| \nabla_\vtheta f(\vx) \right\|_{\X}$ is the average gradient norm over training data, and $\lambda_{max} := \frac{1}{\sqrt{N}} \| \mG \|$ for a generalized Gram matrix $\emG_{i,j} := \| \Theta(x_i, x_j) \|$ of dataset $X = \{x_1, x_2, \dots, x_N\}$.
\end{theorem}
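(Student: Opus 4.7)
The plan is to introduce the discrepancy $g_t(\vx) := \hat f_t(\vx) - f_t(\vx)$, so by assumption A3 the initial discrepancy satisfies $\|g_{t_s}(\vx)\| = \|\Delta f(\vx)\| \leq \alpha$ uniformly in $\vx$, and the target quantity is simply $\|g_T(\vz)\|$. For an arbitrary pivot $\vx^\ast \in X$, the triangle inequality gives
\begin{equation*}
    \|g_T(\vz)\| \leq \|g_T(\vz) - g_T(\vx^\ast)\| + \|g_T(\vx^\ast)\|,
\end{equation*}
and assumption A4 directly bounds the second term by $\beta$. Taking an infimum over $\vx^\ast \in X$ at the end reproduces the form of Eq.~\ref{eq: original uncertainty bound replicated}, so the core work is to bound $\|g_T(\vz) - g_T(\vx^\ast)\|$ by $2\alpha$ plus $C\,\|\nabla_\vtheta f(\vz) - \nabla_\vtheta f(\vx^\ast)\|_{\mathrm{F}}$.

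\textbf{Extracting the Jacobian difference.}
Subtracting Eq.~\ref{eq: Linearized dynamics} from Eq.~\ref{eq: Linearized dynamics perturbed} yields
\begin{equation*}
    \dt g_t(\vx) = -\eta\, \E_{\vx'}\!\left[ \Theta(\vx,\vx') \bigl( \ell'(\hat f_t(\vx')) - \ell'(f_t(\vx')) \bigr) \right].
\end{equation*}
Using the factorization $\Theta(\vx,\vx') = \nabla_\vtheta f(\vx) \nabla_\vtheta f(\vx')^\top$, evaluating at $\vz$ and at the pivot $\vx^\ast$ and subtracting pulls the factor $\nabla_\vtheta f(\vz) - \nabla_\vtheta f(\vx^\ast)$ cleanly out of the integrand. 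Integrating on $[t_s,T]$, using $\|g_{t_s}(\vz) - g_{t_s}(\vx^\ast)\| \leq 2\alpha$ from A3, the Lipschitz estimate A2 for $\ell'$, and a Cauchy--Schwarz step of the form $\E_{\vx'}[\|\nabla_\vtheta f(\vx')^\top\| \|g_t(\vx')\|] \leq \bar\Theta_X^{1/2}\|g_t\|_{\X}$, then produces
\begin{equation*}
    \|g_T(\vz) - g_T(\vx^\ast)\| \leq 2\alpha + \eta L\, \bar\Theta_X^{1/2}\, \|\nabla_\vtheta f(\vz) - \nabla_\vtheta f(\vx^\ast)\|_{\mathrm{F}} \int_{t_s}^T \|g_t\|_{\X}\, dt.
\end{equation*}

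\textbf{Gr\"onwall step and main obstacle.}
What remains is to bound $\int_{t_s}^T \|g_t\|_{\X}\, dt$. Restricting the dynamics to $X = \{x_1,\dots,x_N\}$ and stacking the $g_t(x_i)$ into a single vector in $\mathbb{R}^{No}$, the evolution becomes a linear ODE driven by the block kernel operator with blocks $\Theta(x_i,x_j)$. Combining A2 with the standard block-matrix norm inequality that bounds the operator norm of this block matrix by $\|\mG\|$, I obtain a differential inequality of the form $\tfrac{d}{dt}\|g_t\|_{\X} \leq L\lambda_{\max}\|g_t\|_{\X}$ (up to convention-dependent constants). Gr\"onwall's lemma with $\|g_{t_s}\|_{\X} \leq \alpha$ then gives $\|g_t\|_{\X} \leq \alpha\, e^{L\lambda_{\max}(t-t_s)}$, and explicit integration on $[t_s,T]$ produces exactly the constant $C$ stated in the theorem. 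Taking the infimum over the pivot $\vx^\ast \in X$ and invoking A4 concludes the proof. The main obstacle is precisely this last reduction: passing from the pointwise Lipschitz control on $\ell'$ to an operator-level estimate dominated by $\|\mG\|$ is what justifies collapsing the kernel-weighted average into the single scalar $\lambda_{\max}$ appearing in the exponent, and keeping the correct factors of $\eta$, $L$, and $N$ through the chain of Cauchy--Schwarz, block-norm, and Gr\"onwall steps is the bookkeeping-heavy part of the argument.
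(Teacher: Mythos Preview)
Your proposal is correct and follows essentially the same route as the paper's proof: introduce the discrepancy $g_t=\hat f_t-f_t$, pick a pivot $\vx^\ast\in X$, decompose $\|g_T(\vz)\|\le\|g_T(\vz)-g_T(\vx^\ast)\|+\beta$, extract the factor $\nabla_\vtheta f(\vz)-\nabla_\vtheta f(\vx^\ast)$ from the difference of the two linearized ODEs, Cauchy--Schwarz against $\bar\Theta_X^{1/2}\|g_t\|_{\X}$, and close with a Gr\"onwall bound on $\|g_t\|_{\X}$ over the training set. The only cosmetic difference is in that last step: the paper bounds $\partial_t\|g_t\|_{\X}$ by a direct double H\"older/Cauchy--Schwarz that produces $\E_{\vx,\vx'}[\|\Theta(\vx,\vx')\|^2]^{1/2}\le\lambda_{\max}$, whereas you phrase it as an operator-norm bound on the stacked block kernel matrix dominated by $\|\mG\|$; both are standard and yield the same differential inequality, so the constant $C$ comes out identically after integrating the exponential.
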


\begin{proof}
Let us first examine the fluctuations in the training set.
From the Lipschitz continuity of $\ell'$,
\begin{align}
    \left\|\ell'(f_t(\vx))-\ell'(\hat{f}_t(\vx))\right\|_{\X}
    \leq L \| f(\vx) - \hat{f}(\vx) \|_{\X}.
\end{align}

Thus, by the linearized dynamics we have
\begin{align}
    & \partial_t \left\|f_t(\vx) - \hat{f}_t(\vx)\right\|_{\X} \nonumber \\
    \leq & \left\| \partial_t \left( f_t(\vx) - \hat{f}_t(\vx) \right) \right\|_{\X} \label{eq: triangle-norm-dt} \\
    = & \left\| \E_{x'}\left[\Theta(\vx, \vx')\left(\ell'(f_t(\vx'))-\ell'(\hat{f}_t(\vx'))\right)\right] \right\|_{\X} \nonumber \\
    = & \E_\vx\left[
        \left\|
            \E_{x'}\left[\Theta(\vx, \vx')\left(\ell'(f_t(\vx'))-\ell'(\hat{f}_t(\vx'))\right)\right]
        \right\|^2
    \right]^{1/2} \nonumber \\
    \leq & \E_x \left[
        \left\|
            \Theta(\vx, \vx')
        \right\|_{\X}^2
        \left\|
            \ell'(f_t(\vx'))-\ell'(\hat{f}_t(\vx'))
        \right\|_{\X}^2
    \right]^{1/2} \nonumber \\
    \leq & \E_{x, x'} \left[ \left\| \Theta(\vx, \vx') \right\|^2 \right]^{1/2}
    \left\|
        \ell'(f_t(\vx))-\ell'(\hat{f}_t(\vx))
    \right\|_{\X} \nonumber \\
    \leq & L \lambda_{max} 
    \left\|
        f_t(\vx)-\hat{f}_t(\vx)
    \right\|_{\X}, \label{eq: train-set-fluc-dt-bound}
\end{align}
where in~\eqref{eq: triangle-norm-dt} we have used the triangle inequality to put $\partial_t$ inside the norm.
$\lambda_{max}$ is defined as $\frac{1}{\sqrt{N}} \| \textbf{G} \|$ for a generalized Gram-matrix $\textbf{G}_{ij} := \| \Theta(x_i, x_j) \|$ of dataset $X = \{x_1, x_2, \dots, x_N\}$, measures the fitness (or alignment) of the kernel $\Theta$ w.r.t. the training data.

From~\eqref{eq: train-set-fluc-dt-bound}, we can apply the Gr\"onwall's inequality to obtain
\begin{align}
    & \left\|f_t(\vx)-\hat{f}_t(\vx)\right\|_{\X} \nonumber \\
    \leq & \left\|f_{t_s}(\vx)-\hat{f}_{t_s}(\vx)\right\|_{\X} e^{(t - t_s)L\lambda_{max}} \nonumber \\
    \leq & \; \alpha e^{(t - t_s)L\lambda_{max}}. \label{eq: train-set-fluc-bound}
\end{align}
We now prove Theorem 3.1 by generalizing~\eqref{eq: train-set-fluc-bound} to given test data.

For a test point $z \in \mathbb{R}^d$, choose a pivot point $\xx \in X$ from the training set.
Then for the network function $f$ evaluated at $\xx$ and $\vz$, we have the followings:

\begin{align}
    & \left| \partial_t \left\| \left(f_t(\vz)-f_t\left(\xx\right)\right)-\left(\hat{f_t}(\vz)-\hat{f_t}\left(\xx\right)\right) \right\| \right| \nonumber \\
    \leq & \left\| \partial_t\left[\left(f_t(\vz)-f_t\left(\xx\right)\right)-\left(\hat{f_t}(\vz)-\hat{f_t}\left(\xx\right)\right)\right] \right\| \nonumber \\
    = & \eta \left\| \E_\vx\left[\left(\Theta(\vz, \vx)-\Theta\left(\vx^\ast, \vx\right)\right) \cdot\left(\ell'(f_t(\vx))-\ell'(\hat{f}_t(\vx))\right)\right]\right\| \label{eq: 3.1-1}
\end{align}

Denote 
$$ \left\| \left(f_t(\vz)-f_t\left(\xx\right)\right)-\left(\hat{f_t}(\vz)-\hat{f_t}\left(\xx\right)\right) \right\| $$
as $\Delta f_t(\vz)$, and let $\Theta^{\mathrm{diff}}_{\vx^\ast}(\vz, \vx) := \left(\Theta(\vz, \vx)-\Theta\left(\vx^\ast, \vx\right)\right)$.
Integrate~\eqref{eq: 3.1-1} with $t$, we have
\begin{align}
& \left|\Delta f_T(\vz)-\Delta f_{t s}(\vz)\right| \nonumber \\
\leq & \; \eta \int_{t_s}^T\left\|\E_\vx\left[\Theta^{\mathrm{diff}}_{\vx^\ast}(\vz, \vx)\left(\ell'(f_t(\vx))-\ell'(\hat{f}_t(\vx))\right)\right]\right\| d t \nonumber \\
\leq & \; \eta \int_{t_s}^T
    \left\|
        \Theta^{\mathrm{diff}}_{\vx^\ast}(\vz, \vx)
    \right\|_{\X}
    \left\|\ell'(f_t(\vx))-\ell'(\hat{f}_t(\vx))\right\|_{\X} d t \nonumber \\
\leq & \; \eta L  
    \left\|
        \Theta^{\mathrm{diff}}_{\vx^\ast}(\vz, \vx)
    \right\|_{\X}
    \int_{t_s}^T
    \left\|f_t(\vx)-\hat{f}_t(\vx)\right\|_{\X} d t \label{eq: 3-1-3}
\end{align}

We start with the term before the integral.
To begin, rewrite it as:
\begin{align}
    & \left\|
        \Theta(\vz, \vx) - \Theta(\vx^\ast, \vx)
    \right\|_{\X} \nonumber \\
 =  & \left\|
        \left(
            \nabla_\vtheta f(\vz) - \nabla_\vtheta f(\vx^\ast)
        \right)
        \nabla_\vtheta f(\vx)^\top
    \right\|_{\X} \nonumber \\
\leq& \E_x \left[
        \left\| \nabla_\vtheta f(\vz) - \nabla_\vtheta f(\xx) \right\|^2 \cdot \left\| \nabla_\vtheta f(\vx) \right\|^2
    \right]^{1/2} \nonumber \\
 =  & \left\| \nabla_\vtheta f(\vz) - \nabla_\vtheta f(\xx) \right\| \cdot \left\| \nabla_\vtheta f(\vx) \right\|_{\X} \nonumber \\
\leq& \left\|
        \nabla_\vtheta f(\vz) - \nabla_\vtheta f(\vx^\ast)
    \right\|_\mathrm{F} \cdot \bar{\Theta}_X^{1/2}, \label{eq: bound-dominant}
\end{align}

where $\bar{\Theta}_X^{1/2} := \left\| \nabla_\vtheta f(\vx) \right\|_{\X}$ is independent from $\vz$.

\begin{remark}
\eqref{eq: bound-dominant} used a computationally friendly Frobenius norm to bound the spectral norm in the line right above it. This is the main motivation to use a $\sqrt{o}$ scaling for hyper-parameter $\lambda$, as $\| \mA \|_2 \leq \| \mA \|_F \leq \sqrt{o} \| \mA \|_2$ given $\mA$ is full-rank $o \times | \vtheta |$ and $o < |\vtheta|$.
\end{remark}

Bring~\eqref{eq: bound-dominant} and~\eqref{eq: train-set-fluc-bound} back to~\eqref{eq: 3-1-3} we have
\begin{align}
& \left|\Delta f_T(\vz)-\Delta f_{t_s}(\vz)\right| \nonumber \\
\leq & 
    \left\|
        \nabla_\vtheta f(\vz) - \nabla_\vtheta f(\vx^\ast)
    \right\|_\mathrm{F}
    \frac{\alpha \eta \bar{\Theta}_X^{1/2}}{\lambda_{max}}\left( e^{(T - t_s)L\lambda_{max}} - 1 \right).
\end{align}
Finally, we bound the difference between $f_T(\vz)$ and $\hat{f}_T(\vz)$ via the triangle inequality.

First, observe that from A3,
\begin{align}
    \Delta f_{t_s}(\vz) = &
    \left\| \left(f_{t_s}(\vz)-f_{t_s}\left(\xx\right)\right)-\left(\hat{f_{t_s}}(\vz)-\hat{f_{t_s}}\left(\xx\right)\right) \right\| \nonumber \\
    \leq &
    \left\| f_{t_s}(\vz)-\hat{f}_{t_s}(\vz) \right\| + 
    \left\| \hat{f}_{t_s}(\vx^\ast)-f_{t_s}(\vx^\ast) \right\| \nonumber \\
    \leq & \;
    2 \alpha,
\end{align}
so that
\begin{align}
    \Delta f_T(\vz) \leq &
    \left|\Delta f_T(\vz)-\Delta f_{t_s}(\vz)\right| + \left|\Delta f_{t_s}(\vz)\right| \nonumber \\
    \leq & \left\|
        \nabla_\vtheta f(\vz) - \nabla_\vtheta f(\vx^\ast)
    \right\|_\mathrm{F}
    \frac{\alpha \eta \bar{\Theta}_X^{1/2}}{\lambda_{max}}\left( e^{(T - t_s)L\lambda_{max}} - 1 \right) + 2\alpha.
\end{align}
Thus, given the convergence assumption $\| f_T(\xx) - \hat{f}_T(\xx) \| \leq \beta$ (A4),
\begin{align}
    & \left\| f_T(\vz)-\hat{f}_T(\vz) \right\| \nonumber \\
    =& \left\| \left(\vphantom{\hat{f}_T} f_T(\vz) - f_T(\vx^\ast)\right) - \left(\hat{f}_T(\vz) - \hat{f}_T(\vx^\ast)\right) - \left(\hat{f}_T(\vx^\ast) - f_T(\vx^\ast)\right) \right\| \nonumber \\
    \leq& \; \Delta f_T(\vz) + \beta.
\end{align}

To proceed, recall that $\xx$ is chosen arbitrarily.
This concludes the proof.
\end{proof}

We can further write the equation as:
\begin{equation}
\inf_{\vx \in X} \left\| \nabla_\vtheta f(\vz) - \nabla_\vtheta f(\vx) \right\|_\mathrm{F}
=\inf_{\vx \in X} \big[ \tr\left( \Theta(\vz, \vz) + \Theta(\vx, \vx) - 2\Theta(\vz, \vx) \right) \big]^{1/2}. \label{eq: 3-NTKs}
\end{equation}
\subsubsection{Note for~\eqref{eq: 3-NTKs}:} Let $\mA := \nabla_\vtheta f(\vz) - \nabla_\vtheta f(\vx)$. Then, we have $\Tr \left( \mA \mA^\top \right) = \Tr \left( \Theta(\vz, \vz) + \Theta(\vx, \vx) - \Theta(\vz, \vx) - \Theta(\vx, \vz) \right)$. Note that $\Theta(\vx, \vz) = \Theta(\vz, \vx)^\top$ and therefore we may substitute them inside the trace. Thereafter, using $\| \mA \|_\mathrm{F} = \left( \Tr (\mA \mA^\top) \right)^{1/2}$, we can obtain~\eqref{eq: 3-NTKs}.

\subsection{Proof of Equation~\ref{eq: bound-practical}}
\label{appendix: proof-3.2}

 We first present the following lemma under the weakly lazy regime, i.e., we allow the weak dependency of $\Theta_t$ on $t$. 
 Let us define $\left| \Theta_T(\vz, \vx) \right|$ as the unique symmetric positive semi-definite solution of 
 $\left| \Theta_T(\vz, \vx) \right|^2 = \Theta_T(\vz, \vx)^\top \Theta_T(\vz, \vx)$, 
 which is an extension of absolute values to matrices. 
\begin{lemma}
 \label{lemma: NTK LB replica} %
 We assume the lazy learning regime, i.e., 
 there exists $\delta>0$ such that $\sup_{\vx,\vx'}\||\Theta_T(\vx,\vx')|-|\Theta_t(\vx,\vx')|\|\leq \delta$ holds 
 for all $t_s\leq t\leq T$. 
 Under assumption A1, with the model parameters $\vtheta_T$ trained from $\vtheta_{t_s}$ with~\eqref{eq: Linearized dynamics} over the training set $\vx$ and $t_s < T$, we have:
    \begin{equation}
        \label{eq: lemma NTK LB result replica}
	 \|\nabla_\vtheta f_T(\vz) (\vtheta_T - \vtheta_{t_s})\|
	\leq
	C(\tr\E_\vx [|\Theta_T(\vz, \vx)|] + o\delta)
	+\sqrt{\delta} \|\vtheta_T - \vtheta_{t_s}\|
    \end{equation}
 where $C$ is a positive constant independent of $\vz$. 
\end{lemma}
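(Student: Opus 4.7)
The plan is to integrate the gradient flow $\partial_t \vtheta_t = -\eta\, \E_\vx[\nabla_\vtheta f_t(\vx)^\top \ell'(f_t(\vx))]$ from $t_s$ to $T$, then pre-multiply by $\nabla_\vtheta f_T(\vz)$ and move the Euclidean norm inside the integral and expectation using triangle and Jensen inequalities. Invoking the boundedness of $\ell'$ from A1, this reduces the task to controlling the time-mixed kernel norm $\|\nabla_\vtheta f_T(\vz)\nabla_\vtheta f_t(\vx)^\top\|_{\mathrm{op}}$ uniformly in $t \in [t_s, T]$ and in $\vx$, and relating it to $\tr\,|\Theta_T(\vz, \vx)|$.

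The key identity is the polar-decomposition equality $\|\mA\vv\|=\||\mA|\vv\|$, which lets us rewrite $\|\Theta_t(\vz, \vx)\vv\|=\||\Theta_t(\vz, \vx)|\vv\|$, followed by the bound $\||\Theta_t(\vz, \vx)|\|_{\mathrm{op}} \leq \tr\,|\Theta_t(\vz, \vx)|$ (valid since $|\Theta_t|$ is positive semi-definite). The weak-lazy assumption then yields $\tr\,|\Theta_t(\vz, \vx)| \leq \tr\,|\Theta_T(\vz, \vx)| + o\delta$, since the trace of an $o \times o$ matrix whose operator-norm error is bounded by $\delta$ picks up a factor of $o$. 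Averaging over $\vx$ and integrating over $t$ supplies the main $C(\tr\,\E_\vx[|\Theta_T(\vz, \vx)|] + o\delta)$ portion of the claim.

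To remove the remaining time mismatch between $\nabla_\vtheta f_T(\vz)$ and the same-time kernel $\Theta_t(\vz, \vx)$, I would decompose
\[
\nabla_\vtheta f_T(\vz)\nabla_\vtheta f_t(\vx)^\top - \Theta_t(\vz, \vx) = [\nabla_\vtheta f_T(\vz) - \nabla_\vtheta f_t(\vz)]\,\nabla_\vtheta f_t(\vx)^\top,
\]
factor out the operator norm of the bracket, and use A1 to bound the second factor. Re-integrating over $t$ and identifying the resulting $\eta\int_{t_s}^T \E_\vx[\nabla_\vtheta f_t(\vx)^\top \ell'(f_t(\vx))]\,dt$ as $\vtheta_T - \vtheta_{t_s}$ (via the gradient-flow identity) produces a term proportional to $\|\vtheta_T - \vtheta_{t_s}\|$; the $\sqrt{\delta}$ scaling is the square root of the estimate $\|\nabla_\vtheta f_T(\vz) - \nabla_\vtheta f_t(\vz)\|_{\mathrm{op}}^2 = O(\delta)$.

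The main obstacle is exactly this last estimate, because expanding $(\nabla_\vtheta f_T(\vz) - \nabla_\vtheta f_t(\vz))(\nabla_\vtheta f_T(\vz) - \nabla_\vtheta f_t(\vz))^\top$ produces cross terms $\nabla_\vtheta f_T(\vz)\nabla_\vtheta f_t(\vz)^\top$ that the weak-lazy assumption does not control directly. A plausible route is to insert $\pm\Theta_t(\vz,\vz)$ and $\pm\Theta_T(\vz,\vz)$ to recover same-time kernels and apply the diagonal special case $\|\Theta_T(\vz,\vz)-\Theta_t(\vz,\vz)\|_{\mathrm{op}}\leq\delta$ (noting that $|\Theta(\vz,\vz)|=\Theta(\vz,\vz)$ since it is PSD), combined with Cauchy-Schwarz on the parameter-space inner products. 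This path naturally produces the non-sharp $\sqrt{\delta}$ scaling rather than $\delta$, matching the statement of the lemma.
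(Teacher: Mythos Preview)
Your handling of the main term is essentially the paper's: bound $\|\Theta_t(\vz,\vx)\|_{\mathrm{op}}\leq\tr|\Theta_t(\vz,\vx)|\leq\tr|\Theta_T(\vz,\vx)|+o\delta$, average over $\vx$, and absorb the time integral and $\sup\|\ell'\|$ into the $\vz$-free constant $C$. The divergence is entirely in how the $\sqrt{\delta}\,\|\vtheta_T-\vtheta_{t_s}\|$ term is produced.

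The paper does \emph{not} try to bound $\|\nabla_\vtheta f_T(\vz)-\nabla_\vtheta f_t(\vz)\|_{\mathrm{op}}$. Instead it first invokes the mean value theorem for integrals to replace the time integral by a single instant $\tau\in[t_s,T]$, so that the \emph{same-time} kernel $\Theta_\tau(\vz,\vx)$ appears with no time-mixing whatsoever; this gives a clean bound on $\|\nabla_\vtheta f_\tau(\vz)(\vtheta_T-\vtheta_{t_s})\|$. The transfer from $\tau$ to $T$ is then done by comparing the two scalars $\|\nabla_\vtheta f_\tau(\vz)\vw\|^2$ and $\|\nabla_\vtheta f_T(\vz)\vw\|^2$ for the \emph{single specific vector} $\vw=\vtheta_T-\vtheta_{t_s}$, via the diagonal lazy bound $\|\Theta_T(\vz,\vz)-\Theta_\tau(\vz,\vz)\|\leq\delta$ (here $|\Theta(\vz,\vz)|=\Theta(\vz,\vz)$). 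A difference-of-squares then yields $\|\nabla_\vtheta f_T(\vz)\vw\|\leq\|\nabla_\vtheta f_\tau(\vz)\vw\|+\sqrt{\delta}\,\|\vw\|$ directly, without ever controlling $\nabla_\vtheta f_T-\nabla_\vtheta f_\tau$ in operator norm.

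Your route, by contrast, has two concrete problems. First, the estimate $\|\nabla_\vtheta f_T(\vz)-\nabla_\vtheta f_t(\vz)\|_{\mathrm{op}}^2=O(\delta)$ is \emph{not} deducible from the hypothesis: the lazy bound controls $|\Theta_T|-|\Theta_t|$, which says nothing about the mixed-time product $\nabla_\vtheta f_T(\vz)\nabla_\vtheta f_t(\vz)^\top$ in your expansion. Concretely, for scalar output with $\nabla_\vtheta f_T(\vz)=(1,0)$ and $\nabla_\vtheta f_t(\vz)=(0,1)$ one has $\Theta_T(\vz,\vz)=\Theta_t(\vz,\vz)=1$, so the diagonal lazy bound holds with $\delta=0$, yet $\|\nabla_\vtheta f_T(\vz)-\nabla_\vtheta f_t(\vz)\|=\sqrt{2}$. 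Your ``plausible route'' of inserting $\pm\Theta_t(\vz,\vz)$, $\pm\Theta_T(\vz,\vz)$ cannot repair this, because those insertions still leave the uncontrolled cross term. Second, once you have pulled the norm inside the $t$-integral and factored out $\|\nabla_\vtheta f_T(\vz)-\nabla_\vtheta f_t(\vz)\|_{\mathrm{op}}$, what remains is $\eta\int_{t_s}^T\|\E_\vx[\nabla_\vtheta f_t(\vx)^\top\ell'(f_t(\vx))]\|\,dt$, which is \emph{not} $\|\vtheta_T-\vtheta_{t_s}\|=\big\|\eta\int_{t_s}^T\E_\vx[\nabla_\vtheta f_t(\vx)^\top\ell'(f_t(\vx))]\,dt\big\|$; the bracket still depends on $t$, so you cannot pull it outside the integral to recover the vector identity. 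The paper's quadratic-form device is precisely what lets it keep $\vtheta_T-\vtheta_{t_s}$ intact as a vector throughout.
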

\begin{proof}
 The mean value theorem for integrals guarantees that there exists $\tau\in[t_s,T]$ such that 
 \begin{equation}
    \label{eq: param dynamics integrated}
     \vtheta_T - \vtheta_{t_s} = - \int_{t_s}^T \eta \E_\vx\left[ \nabla_\vtheta f_\tau(\vx) \ell'(f_{\tau}(\vx)) \right] dt.
\end{equation}
Then, H\"older's inequality leads to %
\begin{align}
 \left\|\nabla_\vtheta f_\tau(\vz) (\vtheta_T - \vtheta_{t_s})\right\| \notag 
  = &\left\| \E_\vx\left[ \nabla_\vtheta f_\tau(\vz) \nabla_\vtheta f_\tau(\vx)^\top \eta \int_{t_s}^T \ell'(f_{\tau}(\vx)) dt \right] \right\| \notag \\
\leq&\left\|\Theta_\tau(\vz, \vx)\right\|_{X}^{(1)} \cdot \underbrace{\left\|\eta \int_{t_s}^T \ell'(f_{\tau}(\vx)) dt\right\|_{X}^{(\infty)}}_{\text{independent of $\vz$}}.\label{eq: gpdot bound rhs}
\end{align}
The lazy learning assumption leads that 
\begin{align*}
 \left\|\Theta_\tau(\vz, \vx)\right\|_{X}^{(1)} 
&\leq  \E_\vx\left[\tr\left(\Theta_\tau(\vz, \vx)^\top \Theta_\tau(\vz, \vx)\right)^{1 / 2}\right] \\
&\leq  \E_\vx\left[\tr\left(|\Theta_\tau(\vz, \vx)|\right)\right] \\
&\leq  \E_\vx\left[\tr\left(|\Theta_T(\vz, \vx)|\right)\right]+o\delta \\
&=  \tr\left(\E_\vx\left[\left|\Theta_T(\vz, \vx) \right|\right]\right)+o\delta. 
\end{align*} 
Again the lazy learning assumption for $|\Theta_T(\vz,\vz)|=\Theta_T(\vz,\vz)$ ensures that 
\begin{align*}
 \left\|\nabla_\vtheta f_\tau(\vz) (\vtheta_T - \vtheta_{t_s})\right\|^2
& =
 (\vtheta_T - \vtheta_{t_s})^T\Theta_\tau(\vz,\vz)(\vtheta_T - \vtheta_{t_s})\\
&\geq 
 (\vtheta_T - \vtheta_{t_s})^T (\Theta_T(\vz,\vz)-\delta\mbox{\boldmath $I$}) (\vtheta_T - \vtheta_{t_s})\\
&=
 \left\|\nabla_\vtheta f_T(\vz) (\vtheta_T - \vtheta_{t_s})\right\|^2  -\delta\|\vtheta_T - \vtheta_{t_s}\|^2. 
\end{align*}
Hence, we have
\begin{align*}
 \|\nabla_\vtheta f_T(\vz) (\vtheta_T - \vtheta_{t_s})\|
 &\leq  
 \sqrt{\|\nabla_\vtheta f_\tau(\vz) (\vtheta_T - \vtheta_{t_s})\|^2+\delta\|\vtheta_T - \vtheta_{t_s}\|^2} \\
 &\leq
 \|\nabla_\vtheta f_\tau(\vz) (\vtheta_T - \vtheta_{t_s})\| + \sqrt{\delta}\|\vtheta_T - \vtheta_{t_s}\|. 
\end{align*}
Substituting the above inequalities into \eqref{eq: gpdot bound rhs}, we obtain the conclusion of the lemma. 
\end{proof}

By setting $\delta=0$ in the above lemma, we are able to obtain~\eqref{eq: bound-practical} with A5 and~\eqref{eq: 3-NTKs}.

\subsection{Proof of Proposition~\ref{prop: Ozz}}
\label{appendix: proof-4.1}

We proof the proposition with an optional, diagonal scaling matrix $\mathbf{\Gamma}$, i.e., by modifying~\eqref{eq: Layer-wise scaling} as:
\begin{equation}
    \label{eq: true Layer-wise scaling}
    \nabla_\vtheta \ef_T(\vz) \mathbf{\Gamma}^2 \nabla_\vtheta \ef_T(\vx)^\top \approx \Theta(\vz, \vx).
\end{equation}

\begin{proposition}
 (Proposition~\ref{prop: Ozz} with scaling)
 Suppose that $\ef$ is $\gamma$-smooth w.r.t. $\vtheta$, i.e.,
 $$\|\nabla_\vtheta\ef(\vz;\vtheta)-\nabla_\vtheta\ef(\vz;\vtheta')\|_{\mathrm{F}}\leq \gamma\|\vtheta-\vtheta'\|. $$
 Let $\rvv$ be a random variable such that
 $\E_{\rvv}[\rvv] = \boldsymbol{0}, \E_{\rvv}[\rvv \rvv^\top] = \epsilon^2 \mI$ and 
 $\E_{\rvv}[\|\rvv\|^k ] \leq  C_k\epsilon^k$ for $k=3,4$, where $C_k$ is a constant depending on $k$ and the dimension of $\rvv$. 
 Then, under A1, it holds that 
    \begin{equation}
        \label{eq: Ozz-trace-estimator replica}
	 \lim_{\epsilon \rightarrow 0} \frac{1}{\epsilon^2} 
        \E_{\rvv}\left[ \|
		 \ef(\vz; \vtheta_T + \mathbf{\Gamma} \rvv) - \ef(\vz; \vtheta_T)
		 \|^2 \right]
	=
	\tr \left(
        \nabla_\vtheta \ef(\vz; \vtheta_T)
        \mGamma^2 \nabla_\vtheta \ef(\vz; \vtheta_T)^\top
	    \right). 
    \end{equation}
\end{proposition}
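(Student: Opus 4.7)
The plan is to perform a second-order Taylor expansion of $\ef(\vz; \vtheta_T + \mGamma \rvv)$ around $\vtheta_T$, substitute into the squared norm, and take expectations. The $\gamma$-smoothness assumption controls the remainder, and the prescribed moments of $\rvv$ make the leading-order term exact and all lower-order terms negligible after dividing by $\epsilon^2$.

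First, I would write
\begin{equation*}
\ef(\vz; \vtheta_T + \mGamma \rvv) - \ef(\vz; \vtheta_T)
= \nabla_\vtheta \ef(\vz; \vtheta_T)\,\mGamma \rvv + R(\vz; \rvv),
\end{equation*}
where $R(\vz;\rvv)$ is the Taylor remainder. The $\gamma$-smoothness of $\ef$ in $\vtheta$, applied coordinate-wise to the $o$ output components, yields the standard descent-type bound $\|R(\vz;\rvv)\| \leq \tfrac{\gamma}{2}\|\mGamma\rvv\|^2 \leq \tfrac{\gamma}{2}\|\mGamma\|^2 \|\rvv\|^2$. Expanding the squared norm gives three terms: the quadratic term $\|\nabla_\vtheta \ef(\vz;\vtheta_T)\mGamma\rvv\|^2$, a cross term $2\langle \nabla_\vtheta \ef(\vz;\vtheta_T)\mGamma\rvv, R\rangle$, and $\|R\|^2$.

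Next, I would evaluate the expectation of each piece. For the quadratic term, using $\E_{\rvv}[\rvv\rvv^\top]=\epsilon^2\mI$ and the trace cyclic property,
\begin{equation*}
\E_{\rvv}\!\left[\|\nabla_\vtheta \ef(\vz;\vtheta_T)\mGamma\rvv\|^2\right]
= \epsilon^2\,\tr\!\left(\nabla_\vtheta \ef(\vz;\vtheta_T)\,\mGamma^2\,\nabla_\vtheta \ef(\vz;\vtheta_T)^\top\right),
\end{equation*}
which after division by $\epsilon^2$ yields exactly the claimed right-hand side. For the cross term I would use Cauchy--Schwarz together with the remainder bound and the third moment hypothesis to get $|\E[\langle \nabla_\vtheta \ef(\vz;\vtheta_T)\mGamma\rvv, R\rangle]| \leq C_1 \epsilon^3$, using assumption A1 to uniformly bound $\|\nabla_\vtheta \ef(\vz;\vtheta_T)\|$. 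For the remainder-squared term, the quartic bound $\|R\|^2 \leq \tfrac{\gamma^2}{4}\|\mGamma\|^4\|\rvv\|^4$ combined with $\E_{\rvv}[\|\rvv\|^4]\leq C_4\epsilon^4$ gives $\E[\|R\|^2]\leq C_2\epsilon^4$.

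Finally, summing the three contributions, dividing by $\epsilon^2$, and letting $\epsilon \to 0$ kills the $O(\epsilon)$ and $O(\epsilon^2)$ error terms and leaves precisely $\tr(\nabla_\vtheta \ef(\vz;\vtheta_T)\mGamma^2\nabla_\vtheta \ef(\vz;\vtheta_T)^\top)$. The only mildly delicate step is converting the output-scalar smoothness to a vector-valued Taylor remainder bound; this follows from applying the standard 1D descent lemma to each of the $o$ output coordinates along the segment $\vtheta_T + s\mGamma\rvv$ for $s\in[0,1]$ and then bounding the Euclidean norm by the corresponding sum. No step poses a serious obstacle: the proof is essentially a careful bookkeeping of Taylor orders against the prescribed moment conditions.
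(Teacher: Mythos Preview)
Your proposal is correct and follows essentially the same approach as the paper: both expand to first order with a smoothness-controlled remainder, identify the leading quadratic term via $\E_{\rvv}[\rvv\rvv^\top]=\epsilon^2\mI$ and the cyclic trace, and kill the cross and remainder-squared contributions using the third- and fourth-moment hypotheses. The only cosmetic difference is that the paper works componentwise (mean value theorem per output coordinate plus the elementary bound $|\sum a_i^2-\sum b_i^2|\le 2c\sum|b_i|+oc^2$), whereas you handle the vector remainder directly via the integral form; both lead to the same $O(\epsilon^3)$ and $O(\epsilon^4)$ error terms.
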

\begin{proof}
For each component $\ef_i, i=1,\ldots,o$, the mean value theorem leads that there exists $t_i\in[0,1]$ such that 
\begin{align*}
& \phantom{=}
 |\ef_i(\vz; \vtheta_T + \mathbf{\Gamma} \rvv) - \ef_i(\vz; \vtheta_T)-\nabla_{\vtheta}\ef_i(\vz; \vtheta_T)^\top \Gamma\rvv|\\
&  = 
 |\nabla_{\vtheta}\ef_i(\vz; \vtheta_T + t_i \Gamma\rvv)^\top \Gamma\rvv-\nabla_{\vtheta}\ef_i(\vz; \vtheta_T)^\top \Gamma\rvv|\\
&  \leq
 \gamma\|\mathbf{\Gamma}\|^2\|\rvv\|^2. 
\end{align*}
For real numbers $a_i, b_i, i=1,\ldots,o$, suppose $|a_i-b_i|\leq c$. Then, we have
\begin{align*}
 \bigg|\sum_{i}a_i^2-\sum_{i}b_i^2\bigg|=\bigg|2\sum_i b_i(a_i-b_i)+\sum_i(a_i-b_i)^2\bigg|\leq 
 2c\sum_i|b_i|+oc^2. 
\end{align*}
Using the above inequality, we obtain
\begin{align}
&\phantom{=}
 \bigg|\E_{\rvv}\left[ \|        \ef(\vz; \vtheta_T + \mathbf{\Gamma} \rvv) - \ef(\vz; \vtheta_T)    \|^2 \right] 
-
 \E_{\rvv} \left[
 \rvv^\top \mathbf{\Gamma} \nabla_\vtheta \ef(\vz; \vtheta_T)^\top
      \nabla_\vtheta \ef(\vz; \vtheta_T) \mathbf{\Gamma} \rvv 
   \right]\bigg|\nonumber \\
&\leq 
 2\gamma\|\mathbf{\Gamma}\|^2
 \E_{\rvv}[\sum_{i}|\rvv^\top \mathbf{\Gamma} \nabla_\vtheta \ef_i(\vz; \vtheta_T)|\|\rvv\|^2]
 +o\gamma^2\|\mathbf{\Gamma}\|^4 \E_{\rvv}[\|\rvv\|^4] \nonumber\\
&\leq 
 2\sqrt{o}\gamma\|\mathbf{\Gamma}\|^3\|\nabla_\vtheta \ef(\vz; \vtheta_T)\|_{\mathrm{F}}
 \E_{\rvv}[\|\rvv\|^3]
 +o\gamma^2\|\mathbf{\Gamma}\|^4 \E_{\rvv}[\|\rvv\|^4]. 
    \label{eq: 4.1-1}
\end{align}
Note the cyclic trick for the trace ensures that 
\begin{align}
  &\phantom{=} 
   \E_{\rvv} \left[
 \rvv^\top \mathbf{\Gamma} \nabla_\vtheta \ef(\vz; \vtheta_T)^\top
      \nabla_\vtheta \ef(\vz; \vtheta_T) \mathbf{\Gamma} \rvv 
   \right]\nonumber \\
   & =  \E_{\rvv} \left[ \tr \left(
        \mGamma \nabla_\vtheta \ef(\vz; \vtheta_T)^\top
        \nabla_\vtheta \ef(\vz; \vtheta_T) \mGamma \rvv
        \rvv^\top 
    \right) \right] \nonumber \\
 &=  \tr \left( 
        \mGamma \nabla_\vtheta \ef(\vz; \vtheta_T)^\top
        \nabla_\vtheta \ef(\vz; \vtheta_T) \mGamma
        \E_{\rvv} \left[
            \rvv \rvv^\top
    \right] \right) \nonumber \\
 &=  \tr \left(
        \mGamma \nabla_\vtheta \ef(\vz; \vtheta_T)^\top
        \nabla_\vtheta \ef(\vz; \vtheta_T) \mGamma
        \cdot \epsilon^2 \mI
    \right) \label{eq: 4.1-2} \\
 &=  \epsilon^2 \tr \left(
        \nabla_\vtheta \ef(\vz; \vtheta_T)
        \mGamma^2 \nabla_\vtheta \ef(\vz; \vtheta_T)^\top
    \right) \label{eq: 4.1-3}.
\end{align}
In~\eqref{eq: 4.1-2} we applied the condition that $\E_{\rvv} [ \rvv \rvv^\top] = \epsilon^2 \mI$.
We note that this is a slightly modified version of the well-known Hutchinson's Trace Estimator. We refer the readers to the existing analysis of such estimators~\cite{avron2011randomized} for more details.
As a result, we obtain
{\small\begin{align*}
&\phantom{=}
 \left|
\lim_{\epsilon\rightarrow0} \frac{1}{\epsilon^2}
\E_{\rvv}\left[ \|        \ef(\vz; \vtheta_T + \mathbf{\Gamma} \rvv) - \ef(\vz; \vtheta_T)    \|^2 \right] 
 -
 \tr \left(
        \nabla_\vtheta \ef(\vz; \vtheta_T)
        \mGamma^2 \nabla_\vtheta \ef(\vz; \vtheta_T)^\top
    \right)
 \right|\nonumber \\
&\leq 
 \lim_{\epsilon\rightarrow0}
 2\sqrt{o}\gamma\|\mathbf{\Gamma}\|^3\|\nabla_\vtheta \ef(\vz; \vtheta_T)\|_{\mathrm{F}} 
 C_3\epsilon
 +o\gamma^2\|\mathbf{\Gamma}\|^4 C_4\epsilon^2\\
&=0. 
\end{align*}}%
The above equality means the conclusion of the proposition. 
\end{proof}

\subsection{Additional derivations for Section~\ref{subsec: surrogate-ensemble}}
\label{appendix: detail-4.3}

Under the distribution of $\rvv$ we have
\begin{align*}
\E_\rvv[f^{\mathrm{emp}}(\vz;{\bm\theta}_T+\Gamma\rvv)] 
& =
 f^{\mathrm{emp}}(\vz;{\bm\theta}_T)
 +
 \underbrace{\E_\rvv[\nabla_{\theta}f^{\mathrm{emp}}(\vz;{\bm\theta}_T)^T\Gamma\rvv]}_{=0\   \text{from}\   \E_\rvv[\rvv]=0}
 +
 O(\E[\rvv^2])\\
& =
 f^{\mathrm{emp}}(\vz;{\bm\theta}_T) + O(\epsilon^2),
\end{align*}
which indicates that $\E_\rvv[f^{\mathrm{emp}}(\vz;{\bm\theta}_T+\Gamma\rvv)] \approx f^{\mathrm{emp}}(\vz;{\bm\theta}_T)$ when $\epsilon$ is small.

We continue by the computation of $\mathrm{Tr}\Var[\widetilde{f}^{\mathrm{raw}}(\vz)]$: 
\begin{align*}
\mathrm{Tr}\Var_\rvv[\widetilde{f}^{\mathrm{raw}}(\vz)] 
& =
 \E_\rvv[\|f^{\mathrm{emp}}(\vz;{\bm\theta}_T+\Gamma\rvv)-\E_\rvv[f^{\mathrm{emp}}(\vz;{\bm\theta}_T+\Gamma\rvv)]\|^2]  \\
& =
 \E_\rvv[\|f^{\mathrm{emp}}(\vz;{\bm\theta}_T+\Gamma\rvv)-f^{\mathrm{emp}}(\vz;{\bm\theta}_T)+O(\epsilon^2)\|^2] \\
& =
 \E_\rvv[\|f^{\mathrm{emp}}(\vz;{\bm\theta}_T+\Gamma\rvv)-f^{\mathrm{emp}}(\vz;{\bm\theta}_T)\|^2]+O(\epsilon^4) \\
& \approx 
 \epsilon^2\mathrm{Tr}\Theta(\vz,\vz) + O(\epsilon^4).
\end{align*}

Let $\widetilde{\Theta}_{\mathrm{Tr}}(\vz, \vz)$ be an approximation of $\epsilon^2 \Tr \Theta(\vz, \vz)$, which is being computed empirically in line 11 of Alg.~\ref{alg: TULiP}.

Thus, $\gamma^2\mathrm{Tr}\Var_\rvv[\widetilde{f}^{\mathrm{raw}}(\vz)]$ reads: 
\begin{align*}
 \gamma^2\mathrm{Tr}\Var_\rvv[\widetilde{f}^{\mathrm{raw}}(\vz)]
& =
 \frac{[\widetilde{\Theta}_{\mathrm{Tr}}(\vz,\vz)-\lambda D]_+}{\widetilde{\Theta}_{\mathrm{Tr}}(\vz,\vz)}\mathrm{Tr}\Var_\rvv[\widetilde{f}^{\mathrm{raw}}(\vz)]\\
& \approx 
 \frac{[\widetilde{\Theta}_{\mathrm{Tr}}(\vz,\vz)-\lambda D]_+}{\widetilde{\Theta}_{\mathrm{Tr}}(\vz,\vz)}
 (\widetilde{\Theta}_{\mathrm{Tr}}(\vz,\vz) + O(\epsilon^4))\\
& \approx
 [\epsilon^2 \Tr \Theta(\vz,\vz)-\lambda D]_+ + O(\epsilon^4)
\end{align*}
where $[\;\cdot\;]_+$ denotes $\mathrm{max}(\cdot, 0)$.

For $D$, from approximation~\eqref{eq: D-approximation} we have
\begin{align*}
    D =& \left\| \vphantom{\widetilde{\Theta}} \ef(\vz; \vtheta_T + \epsilon \delta \mGamma (\vtheta_T - \vtheta_{t_s}) ) - \ef(\vz; \vtheta_T) \right\| \\
    \approx &
    \epsilon \delta \left\|
        \nabla_\vtheta f_T(\vz) (\vtheta_T - \vtheta_{t_s})
    \right\|.
\end{align*}

As a result, we have 
\begin{align*}
 \gamma^2\mathrm{Tr}\Var[\widetilde{f}^{\mathrm{raw}}(\vz)]
 \approx
 \left[
 \epsilon^2 \Tr \Theta(\vz,\vz) - \epsilon \delta \left\|
        \nabla_\vtheta f_T(\vz) (\vtheta_T - \vtheta_{t_s})
    \right\|
 \right]_+. 
\end{align*}

Recall that~\eqref{eq: variance-target} indicates that
\begin{equation*}
    \tr(\Var_{\Delta f}[
        \hat{f}_T(\vz)
    ]) \leq
    \E_{\Delta f}[
        \| \hat{f}_T(\vz) - f_T(\vz) \|^2
    ],
\end{equation*}
and Prop.~\ref{prop: Ozz} shows that
\begin{equation*}
    \| f_T(\vz)\! - \! \hat{f}_T(\vz) \| \lesssim  \big[\tr(\Theta(\vz, \vz)+ \underbrace{\E_x[\Theta(\vx, \vx)]}_{\text{Independent of $\vz$}}) - 2 K\left\|\nabla_\vtheta f_T(\vz)\left(\vtheta_T\!-\!\vtheta_{t_s}\right)\right\| \big]^{1/2}.
\end{equation*}

\subsection{Perturb-then-train and \eqref{eq: GT-uncertainty}}
\label{appendix: eq-1-eq-5-relation}

\cite{jacot2018neural} consider neural networks in an infinite-width limit with specified initialization scheme, which we have referred as the lazy limit in Section~\ref{sec: method}.
Under such limit, the linearized network~\eqref{eq: linearized-net} is justified as the empirical NTK (at initialization) converges to a specific deterministic kernel $\Theta$, where the distribution of a neural network $f(x; \vtheta)$'s initialization functional $f_\mathrm{Init}(\vx)$ converges to a Gaussian Process (NNGP)~\cite{lee2018deep}.
In~\eqref{eq: linearized-net}, it is equivalent to a deterministic (fixed) $\left.\nabla_\theta f_\mathrm{True}(x)\right|_{\theta = \theta^\ast}$ and a stochastic $f_\mathrm{Init}$ following the NNGP.

Using the model defined in~\eqref{eq: linearized-net} and the training process described in~\eqref{eq: Linearized dynamics}, \eqref{eq: GT-uncertainty} effectively becomes:
\begin{equation}
    \label{eq: init-functional-GT-uncertainty}
    \mathrm{Var}_{f_\mathrm{Init} \sim \mu_\mathrm{NNGP}}[f_T(x;\theta | \mathrm{Init} = f_\mathrm{Init})],
\end{equation}
where $f_T(x;\theta | \mathrm{Init} = f_\mathrm{Init})$ indicates a network trained via~\eqref{eq: Linearized dynamics} by time $T$, with $f_\mathrm{Init}$ as initialization.

When we set $t_s = 0$ (the initialization time), the perturbation $\Delta f$ will be applied to $f_\mathrm{Init}$. Therefore, given a fixed initialization $f_0$ to perturb, Theorem~\ref{theorem: direct gronwall} gives an upper-bound over a perturbation of the initialization functional:
\begin{equation}
    \label{eq: perturbed-init-GT-uncertainty}
    \mathrm{Var}_{\Delta f}[f_T(x;\theta | \mathrm{Init} = f_0 + \Delta f)],
\end{equation}
since $\hat{f}_T$ is supposed to be trained from initialization $f_0 + \Delta f$, we have $\hat{f}_T = f_T(x;\theta | \mathrm{Init} = f_0 + \Delta f)$, hence the above.

Comparing it to~\eqref{eq: init-functional-GT-uncertainty}, we see that the difference between them is the distribution of the initialization functional $f_\mathrm{Init}$. In~\eqref{eq: init-functional-GT-uncertainty}, $f_\mathrm{Init}$ distributes according to the NNGP; while in~\eqref{eq: perturbed-init-GT-uncertainty}, it is centered around $f_0$ with a stochastic perturbation $\Delta f$. Intuitively, by using theorem~\ref{theorem: direct gronwall}, we approximate the predictive variance trained from the NNGP prior with the predictive variance trained from a random perturbed initialization $f_0 + \Delta f$. Figure~\ref{fig: toy-datasets} visualizes such an approximation.

\section{Details of Experimental Setup}

\subsection{Dataset Descriptions}
\label{appendix: datasets}

An overview of all considered datasets is provided below.
ID and OOD dataset setups are summarized in Table~\ref{table: id-ood-val-pairs}.
Please refer to~\cite{zhang2023openoodv15} for more details.

\subsubsection{ID Datasets}
\subsubsection{CIFAR-10}
The CIFAR-10 training set~\cite{krizhevsky2009learning} consists 60000 $32 \times 32$ colored images, containing 10 classes of \textit{airplane}, \textit{automobile}, \textit{bird}, \textit{cat}, \textit{deer}, \textit{dog}, \textit{frog}, \textit{horse}, \textit{ship} and \textit{truck}.
The test set originally contained 10000 images from the same classes, where we separated 1000 validation images and 9000 test images from 
the original test set following \cite{zhang2023openoodv15}.
The dataset and each split are even in classes.

\subsubsection{CIFAR-100}
CIFAR-100~\cite{krizhevsky2009learning} contains 60000 $32 \times 32$ images sampled from 100 classes, covering a wider range of images beyond CIFAR-10. 
Similar to CIFAR-10, 1000 images are taken out from the ID test set, forming a validation set.

\subsubsection{ImageNet-1K}
ImageNet-1K~\cite{5206848}, also known as ILSVRC 2012, spans 1000 object classes and contains 1,281,167 training images, 50,000 validation images and 100,000 test images, each of size $224 \times 224$. In the OpenOOD setup, 45,000 validation images are used as ID test and 5,000 as ID validation.

\subsubsection{ImageNet-200}
ImageNet-200 \cite{zhang2023openoodv15} is a 200-class subset of ImageNet-1K compiled in OpenOOD version 1.5, with 10,000 $224 \times 224$ validation images.

\subsubsection{Semantic-Shift OOD Datasets}
\subsubsection{Tiny-ImageNet}
Tiny-ImageNet~\cite{le2015tiny} has 100,000 images divided up into 200 classes, each with 500 training images, 50 validating images, and 50 test images. Compared to ImageNet-200, every image in Tiny-ImageNet is downsized to a 64×64 coloured image.
\subsubsection{MNIST}
Modified National Institute of Standards and Technology database~\cite{726791} contains 60,000 training and 10,000 test images of handwritten digits. Each image is anti-aliased, normalized and centered to fit into a 28x28 pixel bounding box.
\subsubsection{SVHN}
Street View House Number~\cite{37648} dataset contains house numbers that are captured on Google Street View, consisting of 73257 digits for training, and 26032 digits for testing. In our setup, we used the MNIST-like 32-by-32 format, centered around a single character.
\subsubsection{Textures}
Describable Textures Dataset~\cite{6909856} is a set of 47 categories of textures, collected from Google and Flickr via relevant search queries. It has 5640 images, 120 images for each category, where the sizes range between 300x300 and 640x640.
\subsubsection{Places365}
Places365~\cite{7968387} is a scene recognition dataset. The standard version is composed of 1.8 million train and 36000 validation images from 365 scene classes.
\subsubsection{NINCO}
No ImageNet Class Objects~\cite{bitterwolf2023outfixingimagenetoutofdistribution} consists of 5879 samples from 64 OOD classes. These OOD classes were selected to have no categorical overlap with any classes of ImageNet-1K. Each sample was inspected individually by the authors to not contain ID objects.
\subsubsection{SSB-Hard}
Semantic Shift Benchmark-Hard~\cite{vaze2022openset} split contains 49,000 images across 980 categories of ImageNet-21K that has a short total semantic distance.
\subsubsection{iNaturalist}
The iNaturalist dataset~\cite{8579012} has 579,184 training and 95,986 validation images from 5,089 different species of plants and animals.
\subsubsection{OpenImage-O}
OpenImage-O~\cite{kuznetsova2020open} is image-by-image filtered from the test set of OpenImage-V3, which has been collected from Flickr without a predefined list of class names or tags. In the OpenOOD setup, 1,763 images are picked out as validation OOD.

\subsubsection{Covariate-Shift OOD Datasets}
\subsubsection{Blur-ImageNet} This blurred ImageNet dataset contains ImageNet images with a Gaussian blur of $\sigma = 2$. The same splits are used as in the above description in the ImageNet-1K section.

\subsubsection{ImageNet-C}
ImageNet-C~\cite{hendrycks2018benchmarking} has 15 synthetic corruption types (such as noise, blur, pixelate) on the standard ImageNet-1K, each with 5 severities. In OpenOOD, 10,000 images are randomly sampled uniformly across the 75 combinations to form the test set.
\subsubsection{ImageNet-R}
ImageNet-R~\cite{Hendrycks_2021_ICCV} contains 30,000 images of different renditions of 200 ImageNet classes, such as art, graphics, patterns, toys, and video games.

\subsubsection{ImageNet-ES}
ImageNet-ES~\cite{baek2024unexplored} consists of 202,000 photos of images from Tiny-ImageNet. Each image is displayed on screen with high fidelity and photographed in a controlled environment with different parameter settings. We only used the 64,000 photos in the test set.

\subsection{Hyper-parameters}
\label{appendix: hparams}

In practice, when handling hyperparameters, we found it beneficial to first search for an optimal value for $J_\mathrm{scaling}$, the most important parameter of TULiP as pointed out in Sec .~\ref {sec: experiments}.
It controls the overall strength of TULiP and may depends on the network architecture and training scheme.
Another important parameter is $\epsilon$, though it has relatively less impact as long as the perturbation is not overwhelming the network's original weights.
If one's computational resource allows for further exploration, optimal values of $\lambda$ and $\delta$ can be searched for better performance.
Typically, when increasing $\delta$, $\lambda$ should be decreased as they have a multiplicative relationship.
If a validation set is not available, one may either use the suggested value or investigate network outputs after weight perturbation.
When the network output becomes senseless after perturbation (e.g., a prediction close to random-guessing), it often indicates that $\epsilon$ or $J_\mathrm{scaling}$ is too large.

\subsubsection{Grid Search}
Table~\ref{table: hparam-range} lists the hyper-parameter search range for all considered methods on the validation set.

\subsubsection{$\sqrt{o}$ scaling of $\lambda$}
In practice, when the number of network output dimensions $o$ varies, we found that a $\sqrt{o}$ scaling of $\lambda$-term ($D$) works more consistently.
This might be due to our choice of using the computationally friendly Frobenius norm in Theorem~\ref{theorem: direct gronwall} instead of a tighter spectrum norm.
It is further explained in the proof listed in Sec.~\ref{appendix: proof-3.1}.

\subsubsection{Hardware}
Each of our experiments is conducted on a single-node machine using an NVIDIA A6000 GPU.

\begin{table}[t]
    \centering
    \caption{ID, OOD and OOD-val dataset setups.}
    \label{table: id-ood-val-pairs}
    \begin{adjustbox}{width=\linewidth}
    \begin{tabular}{l c c c c} 
    \toprule
        ID Dataset & near-OOD & far-OOD & near/far-OOD Validation & Cov-Shift OOD \\
        \midrule
        \multirow{4}{*}{CIFAR-10} &  & MNIST & \multirow{4}{*}{Tiny-ImageNet} & \\
        & CIFAR-100 & SVHN & & \\
        & Tiny-ImageNet & Textures & & \\
        & & Places365 & & \\
        \midrule
        \multirow{4}{*}{CIFAR-100} & & MNIST & \multirow{4}{*}{Tiny-ImageNet} & \\
        & CIFAR-10 & SVHN & & \\
        & Tiny-ImageNet & Textures & & \\
        & & Places365 & & \\
        \midrule
        &  & iNaturalist & \multirow{4}{*}{OpenImage-O} & Blur-ImageNet \\
        ImageNet-1K & SSB-Hard & Textures & & ImageNet-C \\
        ImageNet-200 & NINCO & OpenImage-O & & ImageNet-R \\
        & & & & ImageNet-ES \\
    \bottomrule
    \end{tabular}\end{adjustbox}
\end{table}

\begin{table}[t]
    \centering
    \caption{Hyper-parameter (available at evaluation time) search ranges.}
    \label{table: hparam-range}
    \begin{tabular}{l c} 
    \toprule
      Method                     & Hyper-parameters \\
      \midrule
      MC-Dropout                 & N/A \\
      MDS                        & N/A \\
      MLS                        & N/A \\
      EBO                        & Temperature: $\{1\}$ \\
      \midrule
      ViM                        & Dimension: $\{256, 1000\}$ \\
      ASH                        & Percentile: $\{65, 70, 75, 80, 85, 90, 95\}$ \\ 
      \midrule
      ODIN                       & \makecell{Temperature: $\{1, 10, 100, 1000\}$ \\ Noise: $\{0.0014, 0.0028\}$} \\
      \midrule
      TULiP                      & $J_\mathrm{scaling}: \{1.0, 1.25, 1.5, 1.75, 2.0\}$
      \\ \bottomrule
    \end{tabular}
\end{table}

\subsection{Time Complexity of TULiP}
As listed in Algorithm~\ref{alg: TULiP}, TULiP requires $\mathcal{O}(M)$ forward passes to evaluate a minibatch of test data. Compared to single-pass methods, such limitation renders TULiP ineffective despite its performance as shown in Sec.~\ref{sec: experiments}, since forwarding a network could be expensive as networks grow in size. Nevertheless, TULiP is not $\mathcal{O}(M)$ times slower than single-pass methods as forward evaluation is not the sole bottleneck of inferencing. Table~\ref{table: wall-clock-time} compares the wall-clock inference time of TULiP and EBO (a single-pass method) in our OOD setting.

\begin{table}[t]
    \centering
    \caption{Wall-clock time of our OOD experiments, contains a serial sequence of inference on ID (top row) and all corresponding OOD datasets (near and far).}
    \label{table: wall-clock-time}
    \begin{tabular}{r c c c c c} 
    \toprule
      Method & Forward passes & CIFAR-10 & ImageNet-200 & ImageNet-1K \\
    \midrule
      EBO & 1 & 44.32s & 112.37s & 3m 12.60s \\
      \multirow{2}{*}{TULiP} & \multirow{2}{*}{$\mathcal{O}(M)$, $M = 10$} & 96.30s & 190.41s & 10m 59.24s \\
      & & {\small(2.17x)} & {\small(1.69x)} & {\small(3.42x)} \\
    \bottomrule
    \end{tabular}
\end{table}

\end{document}